\newtheorem{thm}{Theorem}[section]
\newtheorem{thmLem}{Theorem}[section]
\newtheorem{lem}[thmLem]{Lemma}
\newtheorem{thmDef}{Theorem}[section]
\newtheorem{defn}[thmDef]{Definition}
\newtheorem{thmRes}{Theorem}[section]
\newtheorem{res}[thmRes]{Result}
\newtheorem{thmConj}{Theorem}[section]
\newtheorem{conj}[thmConj]{Conjecture}
\newcommand{\n}[1]{\ensuremath{\widetilde{#1}}\xspace}
\newcommand{\bb}[1]{\ensuremath{\left(#1\right)}}
\newcommand{\var}{\text{var}}
\newcommand{\diag}{\text{diag}}
\newcommand{\natwo}{\ensuremath{\n{A}'}}
\newcommand{\purna}[1]{}
\newcommand{\er}{Erd\H{o}s-R\'{e}nyi\xspace}
\newcommand{\lsum}{\ensuremath{\sum\limits}\xspace}
\newcommand{\sbm}{Stochastic Blockmodel\xspace}
\let\oldh\hat
\renewcommand{\hat}[1]{\ensuremath{\widehat{#1}}\xspace}
\newcommand{\e}{\ensuremath{\mathbf{e}}\xspace}
\newcommand{\uv}{\ensuremath{\mathbf{u}}\xspace}
\newcommand{\vv}{\ensuremath{\mathbf{v}}\xspace}
\newcommand{\xv}{\ensuremath{\mathbf{x}}\xspace}
\newcommand{\wv}{\ensuremath{\mathbf{w}}\xspace}
\newcommand{\na}{\ensuremath{\n{A}}\xspace}
\newcommand{\nao}{\ensuremath{\n{A}_1}\xspace}
\newcommand{\ph}{\ensuremath{\oldh{p}}\xspace}
\newcommand{\rb}{\texttt{RB}\xspace}
\newcommand{\CE}{\texttt{E}\xspace} 
\title{Hypothesis Testing for Automated Community Detection in Networks}
\author{Peter J. Bickel\\
\texttt{bickel@stat.berkeley.edu}\\
University of California,\\
Berkeley,USA.
%\email{bickel@stat.berkeley.edu}
\and
Purnamrita Sarkar\\
\texttt{psarkar@eecs.berkeley.edu}\\
University of California,\\
Berkeley, USA.}
\author[Peter J. Bickel {\it et al.}]{Peter J. Bickel}
\address{University of California,
Berkeley,
USA.}
\email{bickel@stat.berkeley.edu}
\author[]{Purnamrita Sarkar}
\address{University of California,
Berkeley,
USA.}
\begin{document}
%arxiv
\iftoggle{arxiv}{%
\maketitle
}{}
\begin{abstract}
 Community detection in networks is a key exploratory tool with applications in a diverse set of areas, ranging from finding communities in social and biological networks to identifying link farms in the World Wide Web. The problem of finding communities or clusters in a network has received much attention from statistics, physics and computer science. However, most clustering algorithms assume knowledge of the number of clusters $k$.  In this paper we propose to automatically determine  $k$ in a graph generated from a \sbm.  Our main contribution is twofold; first, we theoretically establish the limiting distribution of the principal eigenvalue of the suitably centered and scaled adjacency matrix, and use that distribution for our hypothesis test. Secondly, we use this test to design a recursive bipartitioning algorithm. %Further, we show how to obtain Bartlett type corrections for our test statistic when the graph is small or sparse, i.e. the asymptotic behavior has not been reached.
 %The null hypothesis is that the network is generated from a \sbm with one block, i.e. an \er graph. Our test statistic is the largest eigenvalue of the adjacency matrix, suitably centered and scaled using an estimate of the edge probability. Using recent results from random matrix theory, we establish that the limiting distribution of this statistic is the Tracy-Widom law with index one. Now we use this to design a recursive bipartitioning algorithm that splits the graph in two if the p-value computed using the hypothesis test falls below a given cutoff. Further, we show how to quickly estimate Bartlett type corrections for our test statistic when the graph is small or sparse.
 Using quantifiable classification tasks on real world networks with ground truth, we show that our algorithm outperforms existing probabilistic models for learning overlapping clusters, and on unlabeled networks, we show that we uncover nested community structure.
\end{abstract}

%\begin{keyword}[class=AMS]
%\kwd[Primary ]{60K35}
%\kwd{60K35}
%\kwd[; secondary ]{60K35}
%\end{keyword}
%
%\begin{keyword}
%\kwd{sample}
%\kwd{\LaTeXe}
%\end{keyword}
%\end{frontmatter}

%%%%%INTRODUCTION
\section{Introduction}
Network structured data can be found in many real world problems. Facebook is an undirected network of entities where edges are formed by who-knows-whom. The World Wide Web is a giant directed network with webpages as nodes and hyperlinks as edges. Finding community structure in network data is a key ingredient in many graph mining problems. For example, viral marketing targets tightly knit groups in social networks to increase popularity of a brand of product. There are many clustering algorithms in computer science and statistics literature. However, most of them suffer from a common issue: one has to assume that the number of clusters $k$ is known apriori. %Nonparametric bayesian  methods

%For learning $k$ in clustering datasets~\citep{pelleg-xmeans} optimxisting literature in machine learning estimate $k$ by
For labeled data, a common approach for learning $k$ is cross validating using held out data. However cross validation has two problems: it requires a lot of computation, and for sparse graphs it is sub-optimal to leave out data. In this paper we address this problem via a hypothesis testing framework based on random matrix theory.  This framework also naturally leads to a recursive bipartitioning algorithm, which leads to a hierarchical clustering of the data. %Using this framework we also propose a recursive bi-partitioning algorithm.

For genetic data,~\citet{Patterson_populationstructure} show how to combine Principal Components Analysis with random matrix theory to discover if the data has cluster structure. This work uses existing results on the limit distribution of the largest eigenvalue of large random covariance matrices. %\purna{Is this true?}

In standard machine learning literature where datapoints are represented by real-valued features, ~\citet{pelleg-xmeans} jointly optimize over the set of cluster locations and number of cluster centers in \texttt{kmeans} to maximize the Bayesian Information Criterion (BIC).~\citet{hamerly-gmeans} propose a hierarchical clustering algorithm based on the Anderson-Darling statistic which tests if the data assigned to a cluster comes from a gaussian distribution.

For network clustering, finding the number of clusters automatically via a series of hypothesis tests has been proposed by~\citet{levina_pnas}. The authors present a label switching algorithm for extracting tight clusters from a graph sequentially. The null hypothesis is that there is no cluster structure. As pointed out by them, it is hard to define a null model. Possible candidates for null models are \er graphs, degree corrected block-models etc. The authors point out that for test statistics whose distributions under the null are hard to determine analytically, one can easily do a parametric bootstrap step to estimate the distribution. However this introduces a significant computational overhead for large graphs, since the bootstrap has to be carried out for each community extraction.

We focus on the problem of finding the number of clusters in a graph generated from a \sbm, which is a widely used model for generating labeled graphs~\citep{Holland1983}. Our null hypothesis is that there is only one cluster, i.e. the network is generated from a \er $G_{n,p}$ graph. Existing literature~\citep{necessary-sufficient-tw} can be used to show that the largest eigenvalue of the suitably scaled and centered adjacency matrix asymptotically has the Tracy-Widom distribution. Using recent theoretical results from random matrix theory, we show that this limit also holds when the probability of an edge $p$ is unknown, and the centering and scaling are done using an estimate of $p$.

We would like to emphasize that our theory holds for $p$ constant w.r.t $n$, i.e.\ the \textit{dense} asymptotic regime where the average degree is growing linearly with $n$. We are currently investigating the behavior of the largest eigenvalue when $p$ decays as $n\rightarrow\infty$. Experimentally we show how to obtain Bartlett type corrections~\citep{bartlett37} for our test statistic when the graph is small or sparse, i.e. the asymptotic behavior has not been reached. On quantifiable classification tasks on real world networks with ground truth, our method outperforms~\citet{McALes12}'s algorithm which has been shown to perform better than known methods for obtaining overlapping clusters in networks. Further, we show that our recursive bipartitioning algorithm gives a multiscale view of smaller communities with different densities nested inside bigger ones.

Finally, we conjecture that the second largest eigenvalue of the normalized Laplacian matrix also has a Tracy-Widom distribution in the limit. We are currently working on a proof.

%%%%%PROBLEM DEFINITION
\section{Preliminaries and Proposed Method}
Before presenting our main result, we introduce some notation and definitions.

\paragraph{\textsc{Stochastic Blockmodels:}}

For our theoretical results we focus on community detection in graphs generated from Stochastic Blockmodels. Informally, a \sbm with $k$ classes assigns latent cluster memberships to every node in a graph. Each pair of nodes with identical cluster memberships for the endpoints have identical probability of linkage, thus leading to stochastic equivalence. Let $Z$ denote a $n\times k$ binary matrix where each row has exactly one ``1'' and the $i^{th}$ column has $n_i$ ``1'''s; i.e. the $i^{th}$ class has $n_i$ nodes with $\sum_i n_i=n$. For this paper, we will assume that $Z$ is fixed and unknown. By definition there are no self loops. Thus, the conditional expectation of the adjacency matrix of a network generated from a \sbm  is given by
\begin{align}
\label{eq:sbmdef}
E[A|Z]=ZBZ^T-\diag(ZBZ^T),
\end{align}
where $\diag(M)$ is a diagonal matrix, with $\diag(M)_{ii}=M_{ii}$, $\forall i$. $A$ is symmetric and the edges are independent Bernoulli trials.
Because of the stochastic equivalence, the subgraph induced by the nodes in the $i^{th}$ cluster is simply an \er graph.

Thus, deciding if a \sbm has $k$ or $k+1$ blocks can be thought of as inductively deciding whether there is one block or two. In essence we develop a hypothesis test to determine if a graph is generated from an \er model with matching link probability or not. First we discuss some known properties of \er graphs. Throughout this paper we assume that the edge probabilities are constant, i.e. the average degree is growing as $n$.

\paragraph{\textsc{Properties of \er graphs:}}
Let $A$ denote the adjacency matrix of a \er(n,p) random graph, and let $P:=E[A]$. We will assume that there are no self loops and hence $A_{ii}=0,\forall i$.  Under the \er model, $P$ is defined as follows:
\begin{align}
\label{eq:P}
P=np\e\e^T-pI,
\end{align}
where $\e$ is length $n$ vector with $\e_i=1/\sqrt{n}$, $\forall i$, and $I$ is the $n\times n$ identity matrix.
We also introduce the following normalized matrices.
\begin{align}
\label{eq:def-mat}
\n{A}:=\frac{A-P}{\sqrt{(n-1)p(1-p)}} %\qquad \natwo:=\frac{A-\hat{P}}{\sqrt{(n-1)\ph(1-\ph))}}
\end{align}
The  eigenvalues of $\n{A}$ are denoted by $\lambda_1\geq\lambda_2\geq\dots\geq \lambda_n$. %, wheras eigenvalues of $\n{A}_1$ are denoted by $\mu_1\geq\mu_2\geq\dots\geq \mu_n$.
Let us also define the density of the semi-circle law. In particular we have,
\begin{defn}
Let $\rho_{sc}$ denote the density of the semicircle law, defined as follows:
\begin{align}\label{eq:semicircle}
\rho_{sc}(x):=\frac{1}{2\pi}\sqrt{(4-x^2)_+}\qquad \mbox{$x\in \mathbb{R}$}
\end{align}
\end{defn}

 For Wigner matrices with entries having a symmetric law, the limiting behavior of the empirical distribution of the eigenvalues was established by~\citet{WIG58}. This distribution converges weakly to the semicircle law defined in Equation~\ref{eq:semicircle}. Also,~\citet{tracy-widom94} prove that for Gaussian Orthogonal Ensembles (G.O.E), $\lambda_1$ and $\lambda_n$, after suitable shifting and scaling converge to the Tracy-Widom distribution with index one ($TW_1$).~\citet{Soshnikov99} proved that the above universal result at the edge of the spectrum also holds for more general distributions, provided the random variables have symmetric laws of distribution, all their moments are finite, and $E[\n{A}_{ij}^m]\leq (C m)^m$ for some constant $C$, and positive integers $m$. This shows that $n^{2/3}(\lambda_1-2)$ weakly converges to the limit distribution of G.O.E matrices, i.e. the Tracy-Widom law with index one, for $p=1/2$.

%\purna{Soshnovov assumes that diagonal entries have the same distribution. Is there need to discuss how to go around that?}

Recently \citet{erdos-rigidity} have removed the symmetry condition and established the edge universality result for general Wigner ensembles.
Further~\citet{necessary-sufficient-tw} show a necessary and sufficient condition for having the limiting Tracy-Widom law, which shows that  $n^{2/3}(\lambda_1-2)$ converges weakly to $TW_1$ too.
If we know the true $p$, it would be easy to frame a hypothesis test which accepts or rejects the null hypothesis that  a network is generated from an \er graph. First we will compute $\theta:=n^{2/3}(\lambda_1-2)$, and then estimate the p-value $P(X\geq \theta)$  from available tables of probabilities for the Tracy-Widom distribution. Now for a predefined significance level $\alpha$, we reject the null if the p-value falls below $\alpha$.

However, we do not know the true parameter $p$; we can only estimate it within $O_P(1/n)$ error by computing the proportion of pairs of nodes that forms an edge. Let us denote this estimate by $\ph$. Thus the matrix at hand is $A-\hat{P}$, where $\hat{P}$ is:
\begin{align}
\label{eq:Phat}
\hat{P}=n\ph\e\e^T-\ph I,
\end{align}
 In this paper we show that the extreme eigenvalues of this matrix also follow the $TW_1$ law after suitable shifting and scaling.

\begin{thm}\label{thm:TW}
Let
\begin{align}
\label{eq:A-emp}
\natwo:=\frac{A-\hat{P}}{\sqrt{(n-1)\ph(1-\ph)}}.
\end{align}
We have,
\begin{align}
n^{2/3}\bb{\lambda_1(\natwo) - 2}\stackrel{d}{\rightarrow}\text{TW}_1
%n^{2/3}\bb{\lambda_1(\natwo) - 2+\frac{\sqrt{\ph/(1-\ph)}}{\sqrt{n}}}\stackrel{d}{\rightarrow}\text{TW}_1
%\lambda_n(\natwo)= -2-\frac{\sqrt{\ph/(1-\ph)}}{\sqrt{n}}+n^{-2/3}\mbox{TW}_1
\end{align}
where $\mbox{TW}_1$ denotes the Tracy-Widom law with index one. This is also the limiting law of the largest eigenvalue of Gaussian Orthogonal Ensembles.
\end{thm}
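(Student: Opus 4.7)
The plan is to reduce the claim to the known Tracy-Widom limit for $\n{A}$ cited in the excerpt (Soshnikov; Erd\H{o}s-Yau-Yin;~\citet{necessary-sufficient-tw}) by treating $\natwo - \n{A}$ as a vanishing perturbation. Using $\hat{P} - P = n(\ph-p)\e\e^T - (\ph-p)I$ together with the definitions of $\n{A}$ and $\natwo$, I first write the algebraic decomposition
\begin{align*}
\natwo \;=\; c_n\,\n{A} + a_n\,\e\e^T + b_n\,I,
\end{align*}
where $c_n := \sqrt{p(1-p)/(\ph(1-\ph))}$, $a_n := n(p-\ph)/\sqrt{(n-1)\ph(1-\ph)}$, and $b_n := -(p-\ph)/\sqrt{(n-1)\ph(1-\ph)}$.

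Next I would control the three perturbation pieces probabilistically. Since $\ph$ is the sample mean of $\binom{n}{2}$ independent Bernoulli$(p)$ trials, $\ph - p = O_P(1/n)$, so a Taylor expansion of $\sqrt{x(1-x)}$ gives $c_n = 1 + O_P(1/n)$, $|a_n| = O_P(n^{-1/2})$, and $|b_n| = O_P(n^{-3/2})$. The scalar rescaling and identity shift are then immediate: $\lambda_1(c_n \n{A} + b_n I) = c_n \lambda_1(\n{A}) + b_n$, and since $n^{2/3}(c_n-1)\lambda_1(\n{A}) = O_P(n^{-1/3})$ and $n^{2/3}|b_n| = O_P(n^{-5/6})$ are both $o_P(1)$, Slutsky's theorem combined with the known $n^{2/3}(\lambda_1(\n{A})-2) \stackrel{d}{\to} TW_1$ yields $n^{2/3}(\lambda_1(c_n \n{A} + b_n I) - 2) \stackrel{d}{\to} TW_1$.

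The main obstacle is the rank-one perturbation $a_n\,\e\e^T$, since Weyl's inequality only bounds $|\lambda_1(\natwo) - \lambda_1(c_n \n{A} + b_n I)| \le |a_n| = O_P(n^{-1/2})$, which is not $o_P(n^{-2/3})$. I would instead write the secular equation for the rank-one update, $1/a_n = \sum_i (\e\cdot\vv_i)^2/(\mu - \lambda_i(c_n \n{A} + b_n I))$ for $\mu > \lambda_1(c_n \n{A} + b_n I)$, and combine it with the isotropic delocalization bound $\sup_i (\e \cdot \vv_i)^2 \le n^{-1+\epsilon}$ (with high probability, any $\epsilon > 0$) available for Wigner matrices through the Erd\H{o}s-Yau-Yin and Knowles-Yin machinery, where $\vv_i$ is the $i$-th eigenvector of $\n{A}$. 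A first-order expansion of the secular equation about $\lambda_1(\n{A})$ then gives
\begin{align*}
\lambda_1(\natwo) - \lambda_1(c_n \n{A} + b_n I) = a_n(\e\cdot\vv_1)^2 + O_P(a_n^2) = O_P(n^{-3/2+\epsilon}) = o_P(n^{-2/3}),
\end{align*}
after which a final application of Slutsky's theorem closes the argument. The delicate point is rigorously importing the isotropic local law for $\n{A}$, which is a zero-diagonal, bounded-entry Wigner matrix in the dense regime (recall $P_{ii} = 0$, so $\n{A}_{ii} = 0$), and for which the cited delocalization estimates apply directly.
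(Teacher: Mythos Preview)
Your decomposition $\natwo = c_n\,\n{A} + a_n\,\e\e^T + b_n\,I$ and the strategy of handling the scalar/identity pieces trivially while attacking the rank-one piece with the secular equation plus isotropic delocalization is exactly the route the paper takes (the paper organizes it slightly differently, first passing to $\n{A}_1 := \n{A} - c_n\e\e^T$ with the scaling $p(1-p)$ still in the denominator, but this is cosmetic).

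The genuine gap is your treatment of the rank-one piece. You assert that ``a first-order expansion of the secular equation'' gives $\lambda_1(\natwo) - \lambda_1(c_n\n{A} + b_n I) = a_n(\e\cdot\vv_1)^2 + O_P(a_n^2)$. This is the step that fails. Analytic perturbation theory of that form requires the perturbation size to be small relative to the spectral gap, and here $|a_n| \asymp n^{-1/2}$ while the top eigenvalue gaps of $\n{A}$ are only $O_P(n^{-2/3})$; the perturbation is \emph{larger} than the gap, so there is no convergent expansion and no a priori $O_P(a_n^2)$ remainder. Concretely, when $a_n>0$ the secular equation reads $1/a_n = \sum_i (\e^T\vv_i)^2/(\mu - \tilde\lambda_i)$ with all terms positive, and dropping all but the $i=1$ term gives only a \emph{lower} bound $\mu - \tilde\lambda_1 \ge a_n(\e^T\vv_1)^2$; the upper bound you need does not drop out of a Taylor expansion. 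The paper closes this gap by splitting the sum at the scale $2|a_n|$: terms with $\mu-\tilde\lambda_i > 2|a_n|$ contribute at most $1/(2|a_n|)$ in total, forcing $\mu-\tilde\lambda_1 \le 2|a_n|\sum_{i\in S_{|a_n|}}(\e^T\vv_i)^2$ where $S_{|a_n|}$ collects the eigenvalues within $2|a_n|$ of $\mu$. One then needs the \emph{local semicircle law} (rigidity at the edge, \citet{erdos-rigidity}) to show $|S_{|a_n|}| = O_P(n^{1/4})$, after which delocalization gives $\mu - \tilde\lambda_1 = \n{O}_P(n^{-5/4}) = o_P(n^{-2/3})$. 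Your proposal is missing this eigenvalue-counting ingredient entirely.

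Two smaller points: (i) your secular equation is written only for $\mu > \lambda_1(c_n\n{A}+b_nI)$, i.e.\ $a_n>0$, but $a_n$ is random and can have either sign --- the paper does a case analysis, with the $a_n<0$ case being the easy one ($\mu_1$ is interlaced between $\tilde\lambda_1$ and $\tilde\lambda_2$ and the single-term bound suffices); (ii) your final sentence that delocalization ``applies directly'' to the zero-diagonal matrix $\n{A}$ is optimistic: the standing hypothesis $1/C \le n s_{ij}$ in the cited Knowles--Yin theorem fails on the diagonal, and the paper explicitly patches this by invoking the entry-wise local law from a companion paper of Knowles--Yin that covers this case.
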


Further, it is necessary to see that the above statistic does not have the Tracy-Widom distribution when $A$ is generated from a \sbm with $k>1$ blocks. We show that,  the statistic goes to infinity if $A$ is generated from a \sbm, as long as the class probability matrix $B$ is diagonally dominant. The diagonally dominant condition leads to clusters with more edges within than those across. A similar condition can be found in~\citet{levina_pnas} for proving asymptotic consistency of the extraction algorithm for Stochastic blockmodels with $k=2$. Further,~\citet{bickel2009nonparametric} also note that for $k=2$, the Newman-Girvan modularity is asymptotically consistent if this diagonal dominance holds.
We would like to note that this is only a sufficient condition used to simplify our proof. %\purna{add bickel and chen}
\begin{lem}\label{lem:sbm}
Let $A$ be generated from a \sbm with hidden class assignment matrix $Z$, and probability matrix $B$ (as in Equation~\ref{eq:sbmdef}) whose elements are constants w.r.t $n$.
%Let $$\natwo:=\frac{A-\hat{P}}{\sqrt{(n-1)\ph(1-\ph)}}.$$
If $\forall i, B_{ii}\geq \sum_{j\neq i} B_{ij} $, we have:
\begin{align}
\lambda_1(\natwo)\geq C_0\sqrt{n}
%n^{2/3}\bb{\lambda_1(\natwo) - 2+\frac{\sqrt{\ph/(1-\ph)}}{\sqrt{n}}}\stackrel{d}{\rightarrow}\text{TW}_1
%\lambda_n(\natwo)= -2-\frac{\sqrt{\ph/(1-\ph)}}{\sqrt{n}}+n^{-2/3}\mbox{TW}_1
\end{align}
where $C_0$ is a deterministic positive constant independent of $n$.
\end{lem}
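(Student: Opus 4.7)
The plan is a Weyl-inequality argument: decompose $A-\hat{P}$ into noise and signal, lower-bound the top eigenvalue of the signal by a direct Rayleigh-quotient calculation, and absorb the noise into the error term. Concretely,
\[
A - \hat{P} \;=\; \bb{A - E[A\mid Z]} \;+\; \bb{E[A\mid Z] - \hat{P}},
\]
so that $\lambda_1(A-\hat{P}) \geq \lambda_1\bb{E[A\mid Z]-\hat{P}} - \|A-E[A\mid Z]\|$. The noise matrix has independent, mean-zero, uniformly bounded entries with variances bounded above (since the entries of $B$ are constants in $[0,1]$), so Füredi--Komlós-type concentration gives $\|A-E[A\mid Z]\| = O_P(\sqrt{n})$. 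In parallel, Hoeffding applied to the average $\ph = \binom{n}{2}^{-1}\sum_{i<j} A_{ij}$ shows that $\ph$ is within $O_P(1/n)$ of its deterministic limit $\bar{p} := \sum_{k,\ell} \pi_k \pi_\ell B_{k\ell}$ (with $\pi_k := n_k/n$), so $\bar{p}\in(0,1)$ and the normalizer $\sqrt{(n-1)\ph(1-\ph)}$ is $\Theta(\sqrt{n})$ with probability tending to one.

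For the signal, unpacking (\ref{eq:sbmdef}) and (\ref{eq:Phat}) yields
\[
E[A\mid Z] - \hat{P} \;=\; \bb{ZBZ^T - \ph\,\one\one^T} \;+\; \bb{\ph I - \diag(ZBZ^T)},
\]
and the second bracket is diagonal with operator norm $O(1)$, hence negligible. It therefore suffices to show $\lambda_1(ZBZ^T - \ph\,\one\one^T) \geq c\,n$ for a deterministic constant $c>0$. I would choose $k^\ast \in \arg\max_k B_{kk}$ and evaluate the Rayleigh quotient at the unit test vector $u$ with $u_i = n_{k^\ast}^{-1/2}$ for $i$ in class $k^\ast$ and $u_i = 0$ otherwise. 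Since $Z^T u = \sqrt{n_{k^\ast}}\, \mathbf{e}_{k^\ast}$ and $\one^T u = \sqrt{n_{k^\ast}}$, a direct calculation gives
\[
u^T\bb{ZBZ^T - \ph\,\one\one^T} u \;=\; n_{k^\ast}\bb{B_{k^\ast k^\ast} - \ph}.
\]

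The main obstacle, and the only place where the diagonal-dominance hypothesis is used, is showing that the margin $B_{k^\ast k^\ast} - \ph$ is bounded below by a strictly positive deterministic constant. From $B_{ii}\geq\sum_{j\ne i} B_{ij}$ together with $B_{ij}\geq 0$, each off-diagonal $B_{ij}$ is a nonnegative summand on the right, so $B_{ij}\leq B_{ii}$ for every $j$. Consequently,
\[
\bar{p} \;=\; \sum_{k,\ell} \pi_k\pi_\ell B_{k\ell} \;\leq\; \sum_k \pi_k B_{kk} \;\leq\; B_{k^\ast k^\ast},
\]
with both inequalities strict unless $B$ has all entries equal (in which case the graph is genuinely \er and the alternative $k>1$ is vacuous). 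Setting $\delta := B_{k^\ast k^\ast} - \bar{p} > 0$ and combining with $\ph\convp \bar{p}$ and $n_{k^\ast} = \Theta(n)$, we obtain $n_{k^\ast}\bb{B_{k^\ast k^\ast} - \ph} \geq (\delta/2) n_{k^\ast} \geq c n$ w.h.p. Assembling the bounds gives $\lambda_1(A-\hat{P}) \geq c n - O_P(\sqrt{n}) = \Omega(n)$, and dividing by the $\Theta(\sqrt{n})$ normalizer delivers $\lambda_1(\natwo) \geq C_0\sqrt{n}$ as claimed. All steps besides the margin argument are standard Wigner/Hoeffding concentration or elementary linear algebra.
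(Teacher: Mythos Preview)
Your argument is correct and complete, but it takes a genuinely different route from the paper. The paper exploits diagonal dominance to conclude that $B$ is positive definite, so that $ZBZ^T$ (and hence $E[A\mid Z]$, up to an $O(1)$ diagonal) has $k\geq 2$ eigenvalues of order $n$; it then invokes a matrix concentration bound (Oliveira) to transfer this to $A$, and finally applies Weyl interlacing for the rank-one perturbation $n\ph\,\e\e^T$ to get $\lambda_1(A-\hat P)\geq \lambda_2(A)\geq Cn$. Your proof instead decomposes $A-\hat P$ into signal plus noise and lower-bounds the signal directly via the Rayleigh quotient at a class-indicator vector, reducing the matter to the scalar inequality $B_{k^\ast k^\ast}>\bar p$. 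Your use of diagonal dominance is strictly weaker---you only need $B_{ij}\le B_{ii}$, not full positive definiteness---and your concentration input (F\"uredi--Koml\'os) is comparable to the paper's. What the paper's interlacing route buys is that it avoids any caveat about the degenerate case $B$ constant: once $B$ is strictly positive definite, $\lambda_2(E[A\mid Z])$ is automatically of order $n$, so the paper effectively uses the strict-inequality version of the hypothesis. Conversely, your explicit Rayleigh-quotient construction is more elementary and makes the quantitative constant $\delta=B_{k^\ast k^\ast}-\bar p$ transparent, at the cost of handling the degenerate case by remark.
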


Given this result, we propose the following algorithm to find community structure in networks.
\newcommand*\Let[2]{\State #1 $\gets$ #2}
\begin{algorithm}[!htb]
  \caption{\label{alg:recursive}Recursive Bipartitioning of Networks Using Tracy-Widom Theory}
  \begin{algorithmic}[1]
 %   \Require{$x$ and $y$ are packed DNA strings of equal length $n$}
%    \Statex
    \Function{recursive bipartition}{$G,\alpha$}
      \Let {\ph}{$\dfrac{\sum_{i,j}A_{ij}}{n(n-1)}$}

      \Let {$\natwo$}{$\dfrac{A-\hat{P}}{\sqrt{(n-1)\ph(1-\ph)}}$}

      \Let{$\theta$}{$\lambda_1(\natwo)$}

      \Let{$pval$}{\textsc{HypothesisTest}($\lambda_1$,$n$,\ph)}
      \If{$pval<\alpha$}
      \Let {($G_1$,$G_2$)}{\textsc{bipartition}($G$)}\label{step:bipart}
       \State\textsc{recursive bipartition}($G_1,\alpha$)
       \State\textsc{recursive bipartition}($G_2,\alpha$)
       \EndIf
    \EndFunction
  \end{algorithmic}
\end{algorithm}
For the~\ref{step:bipart}$^{th}$ step in Algorithm~\ref{alg:recursive} we use the regularized version of Spectral Clustering introduced in~\citep{PL}. We want to emphasize that the choice of Spectral Clustering is orthogonal to the hypothesis test. One can use any other method for partitioning the graph.
\subsection{The Hypothesis Test}
Our empirical investigation shows that, while largest eigenvalues of G.O.E matrices converge to the Tracy-Widom distribution quite quickly, those of adjacency matrices do not. Moreover the convergence is even slower if $p$ is small, which is the case for sparse graphs. We elucidate this issue with some simulation experiments. We generate a thousand GOE matrices $M$, where $M_{ij}\sim N(0,1)$. In Figure~\ref{fig:TW_simulation} we plot the empirical density of $\lambda_1(M)/\sqrt{n}$ against the true Tracy-Widom density. In Figures~\ref{fig:TW_simulation}(A) and \ref{fig:TW_simulation}(B) we plot the GOE cases with $n$ equaling $50$ and $100$ respectively, whereas Figures~\ref{fig:TW_simulation}(C) and~\ref{fig:TW_simulation}(D) respectively show the \er cases with $n=50$,  $p=.5$ and $n=500$, $p=.5$.
\begin{figure}[!htb]
\caption{\label{fig:TW_simulation} We plot empirical distributions of largest eigenvalues against the limiting Tracy-Widom law. (A) GOE matrices with $n=50$, (B) GOE matrices with $n=500$. (C) \er graphs with $n=50,p=0.5$, (D) \er graphs with $n=500$ and $p=0.5$.}
 \begin{tabular}{@{\hspace{0em}}c@{\hspace{0em}}c}
\includegraphics[width=0.5\linewidth]{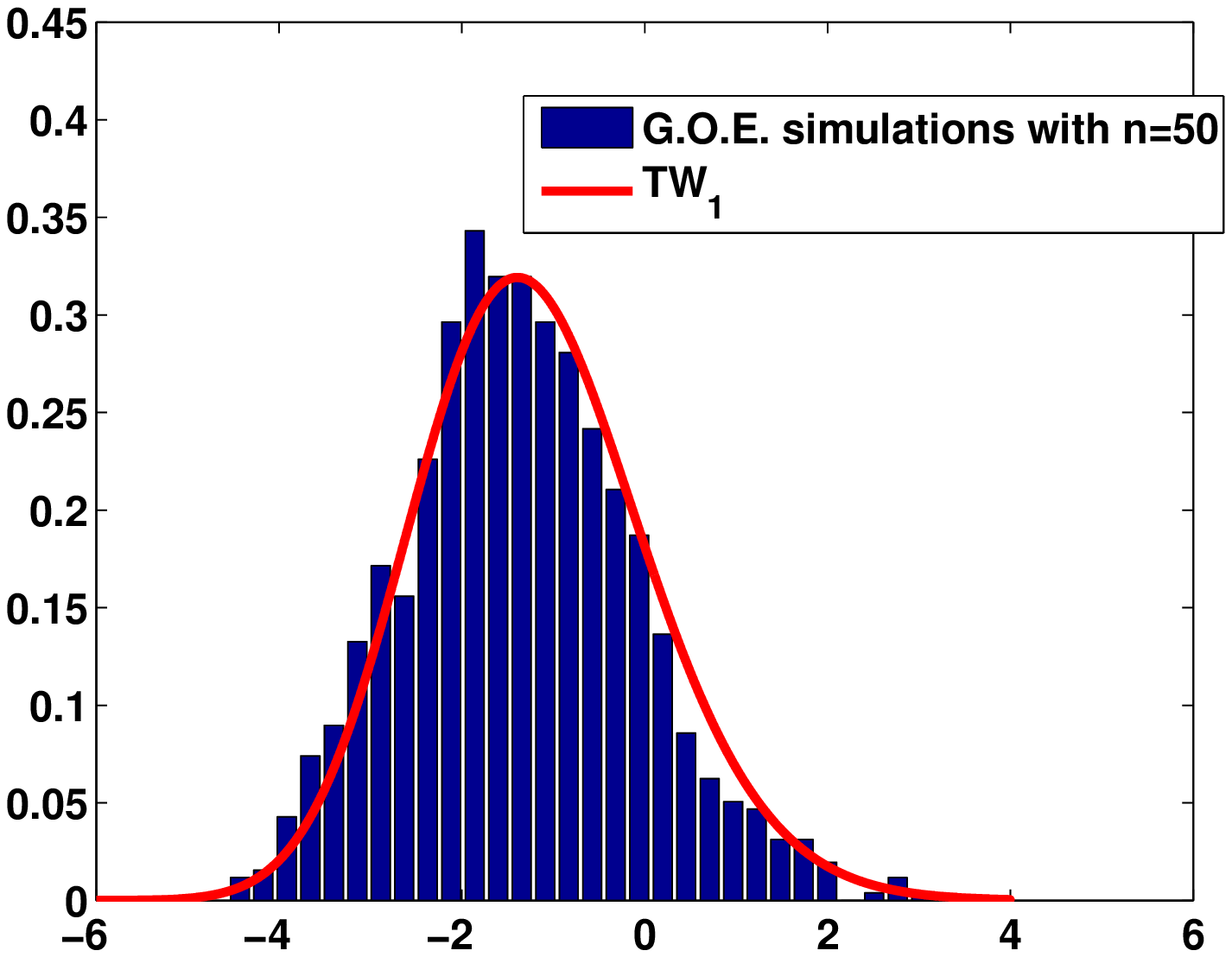}&
\includegraphics[width=0.5\linewidth]{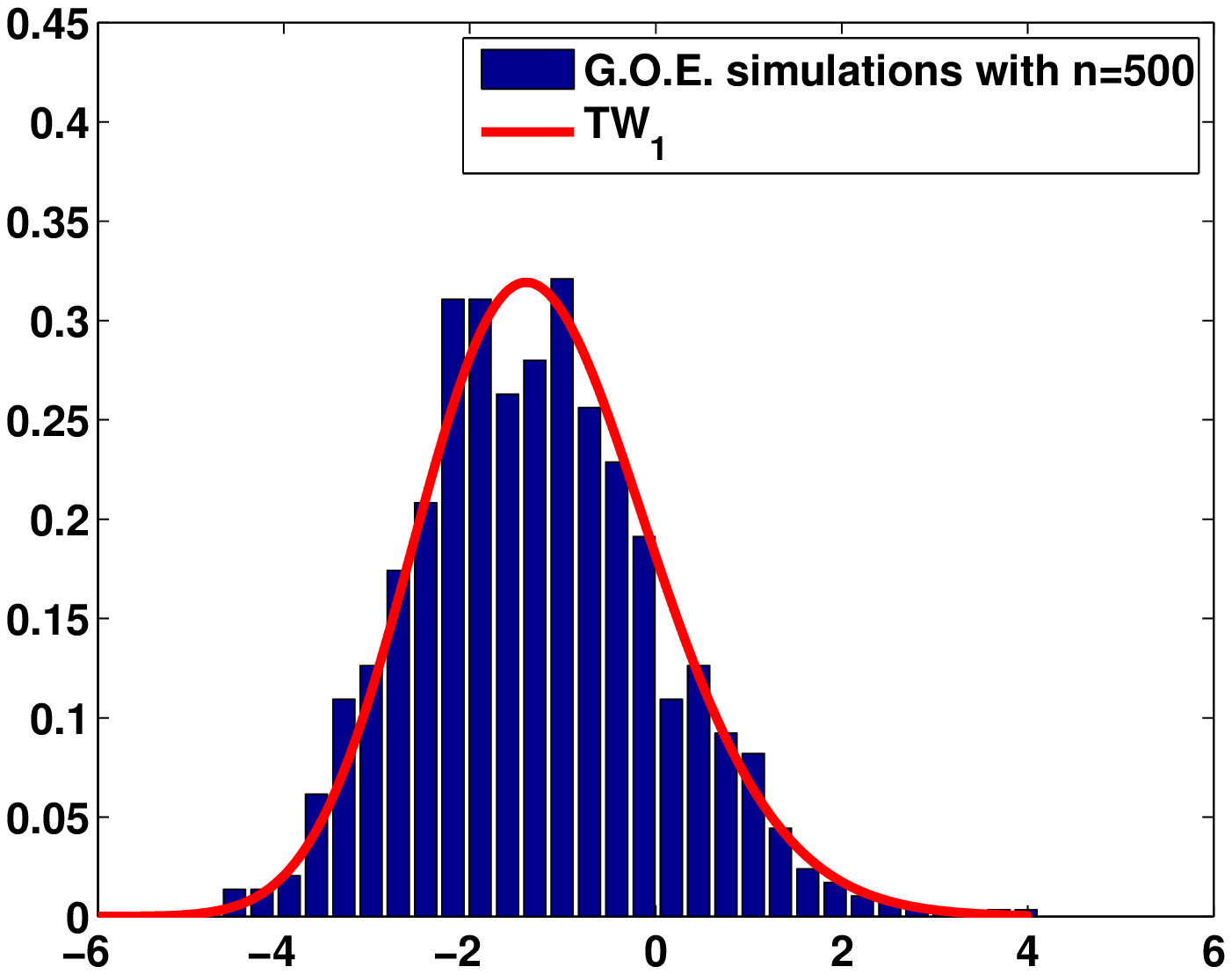}\\
(A)&(B)\\
\includegraphics[width=0.5\linewidth]{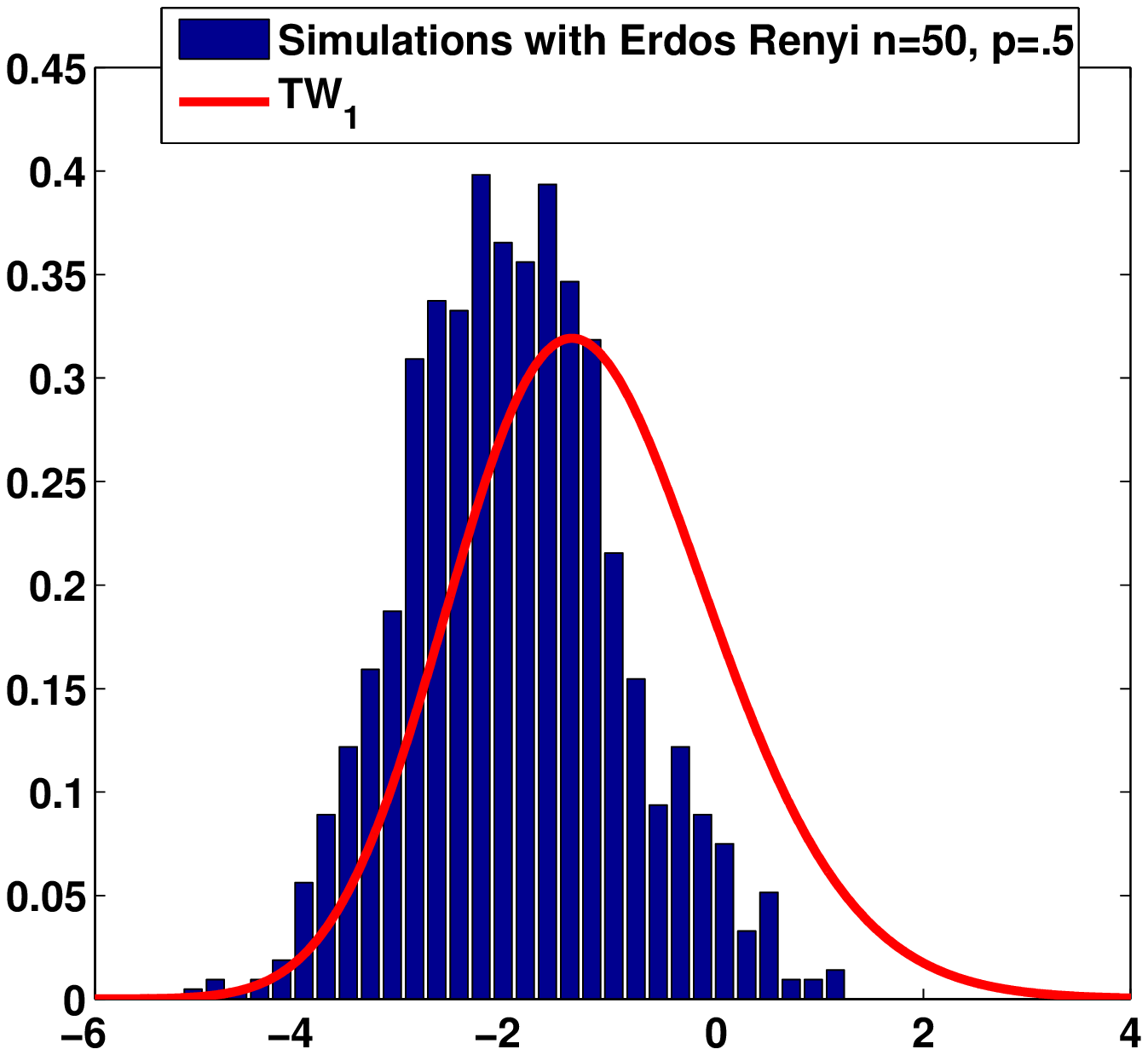}&
\includegraphics[width=0.56\linewidth]{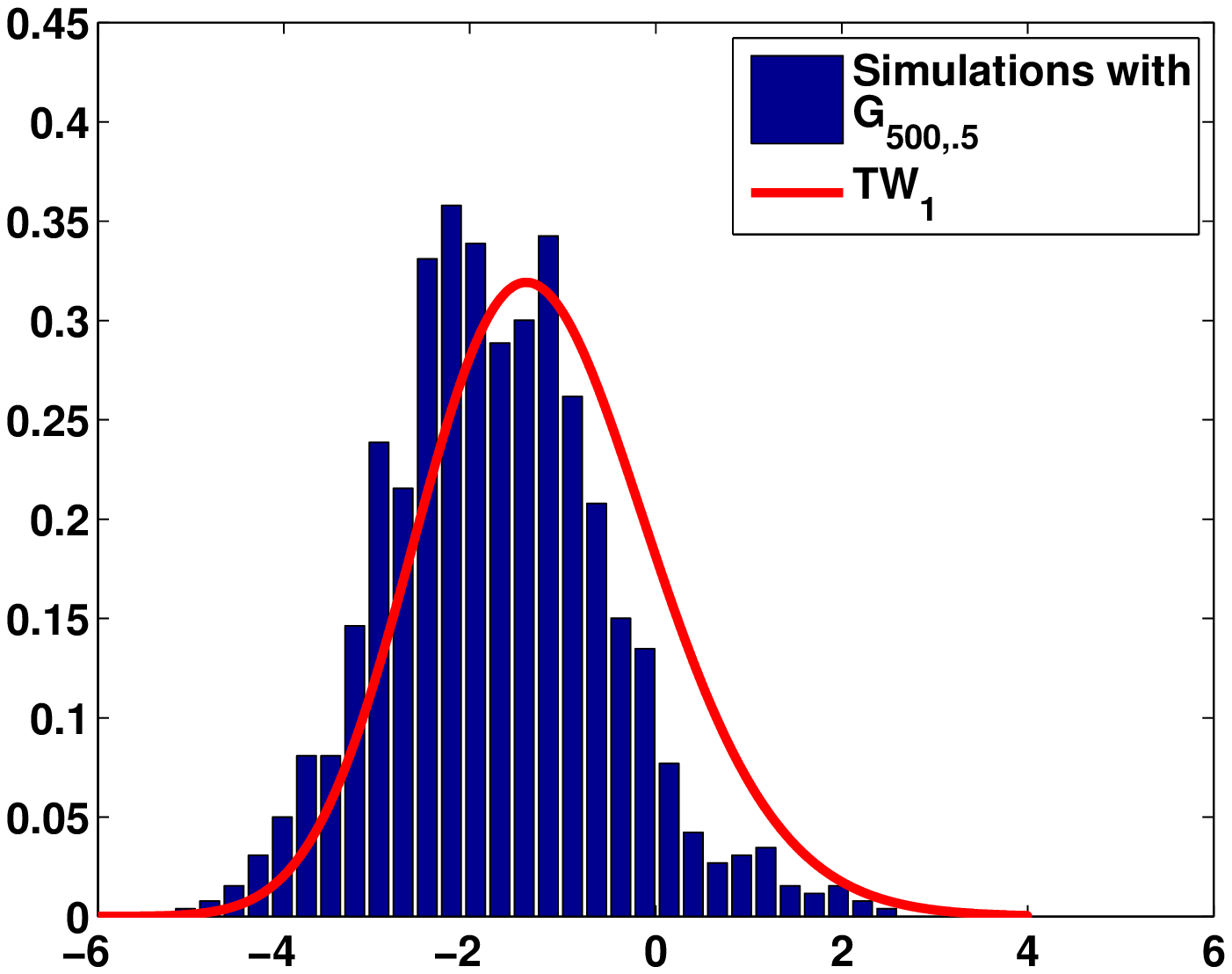}\\
 (C)&(D)
%\includegraphics[width=0.43\linewidth]{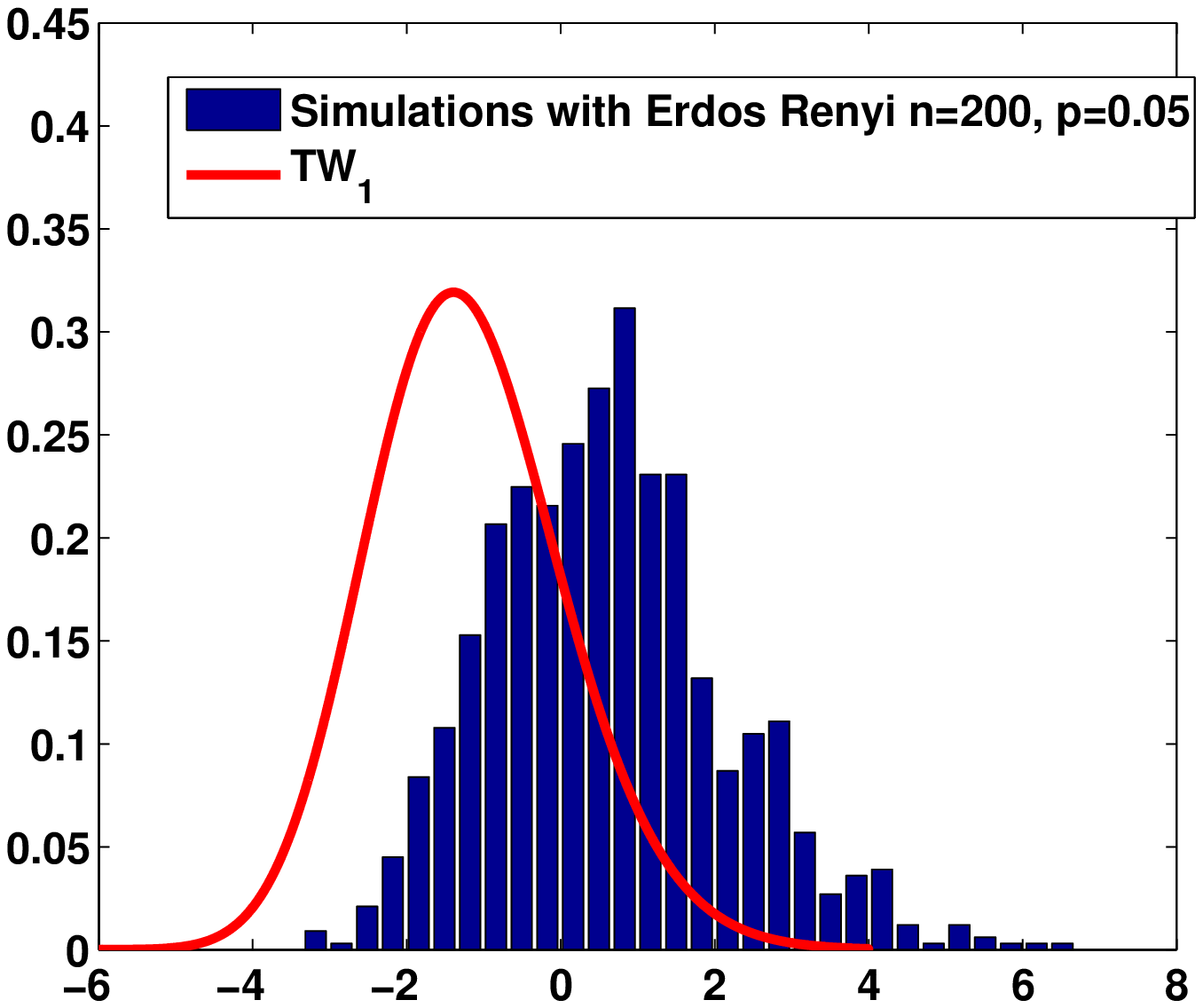}&
%\includegraphics[width=0.43\linewidth]{er_500_05}\\
% (E)&(F)
\end{tabular}
\end{figure}

This suggests that  computing the p-value using the empirical distribution of $\lambda_1$ generated using a parametric bootstrap step will be better than using the Tracy-Widom distribution. However, this will be computationally expensive, since it would have to be carried out at every level of the recursion in Algorithm~\ref{alg:recursive}. Instead we notice that if one can learn the shift and scale of the bootstrapped empirical distribution, it can be well approximated by the limiting $TW_1$ law. Hence we propose to do a few simulations to compute the mean and the variance of the distributions, and then shift and scale the test statistic to match the first two moments of the limiting $TW_1$ law.

\begin{figure}[!htb]
\caption{\label{fig:faster_estimation} We plot \textit{corrected} empirical distributions of largest eigenvalues computed using a thousand bootstrap replicates against the limiting Tracy-Widom law. On the leftmost panel we plot the original uncorrected empirical distribution. On the middle panel we present the corrected version with shift and scale estimated using 1000 samples, whereas on the rightmost panel, the shift and scale are estimated using 50 samples. (A), (B), and (C) are generated from $G_{50,0.5}$, whereas  (D), (E), and (F) are generated from $G_{200,0.05}$.}
 \begin{tabular}{@{\hspace{-1.2em}}c@{\hspace{-1.2em}}c@{\hspace{-1.2em}}c}
 \includegraphics[width=0.4\linewidth]{er_50_5}&
\includegraphics[width=0.4\linewidth]{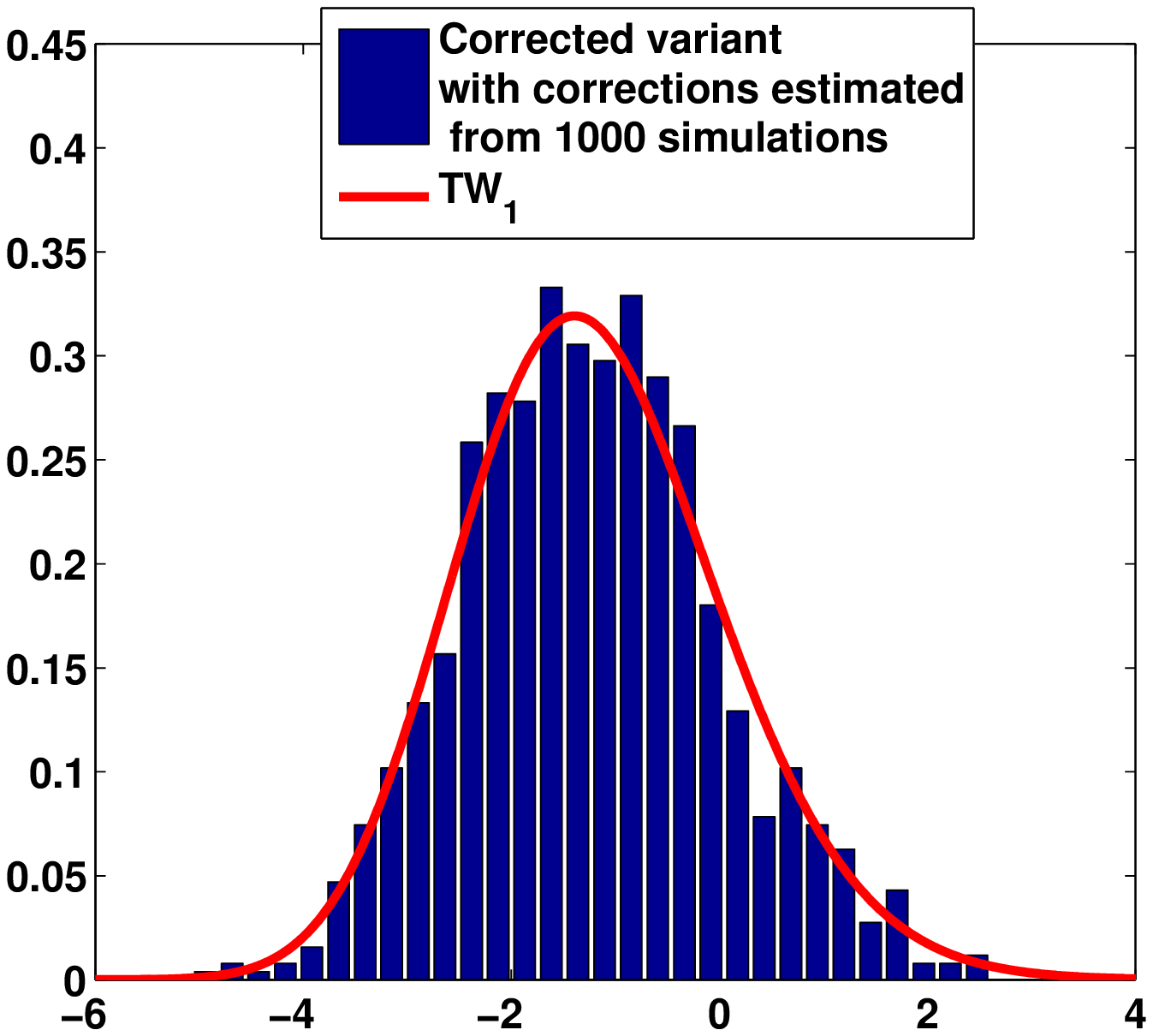}&
\includegraphics[width=0.4\linewidth]{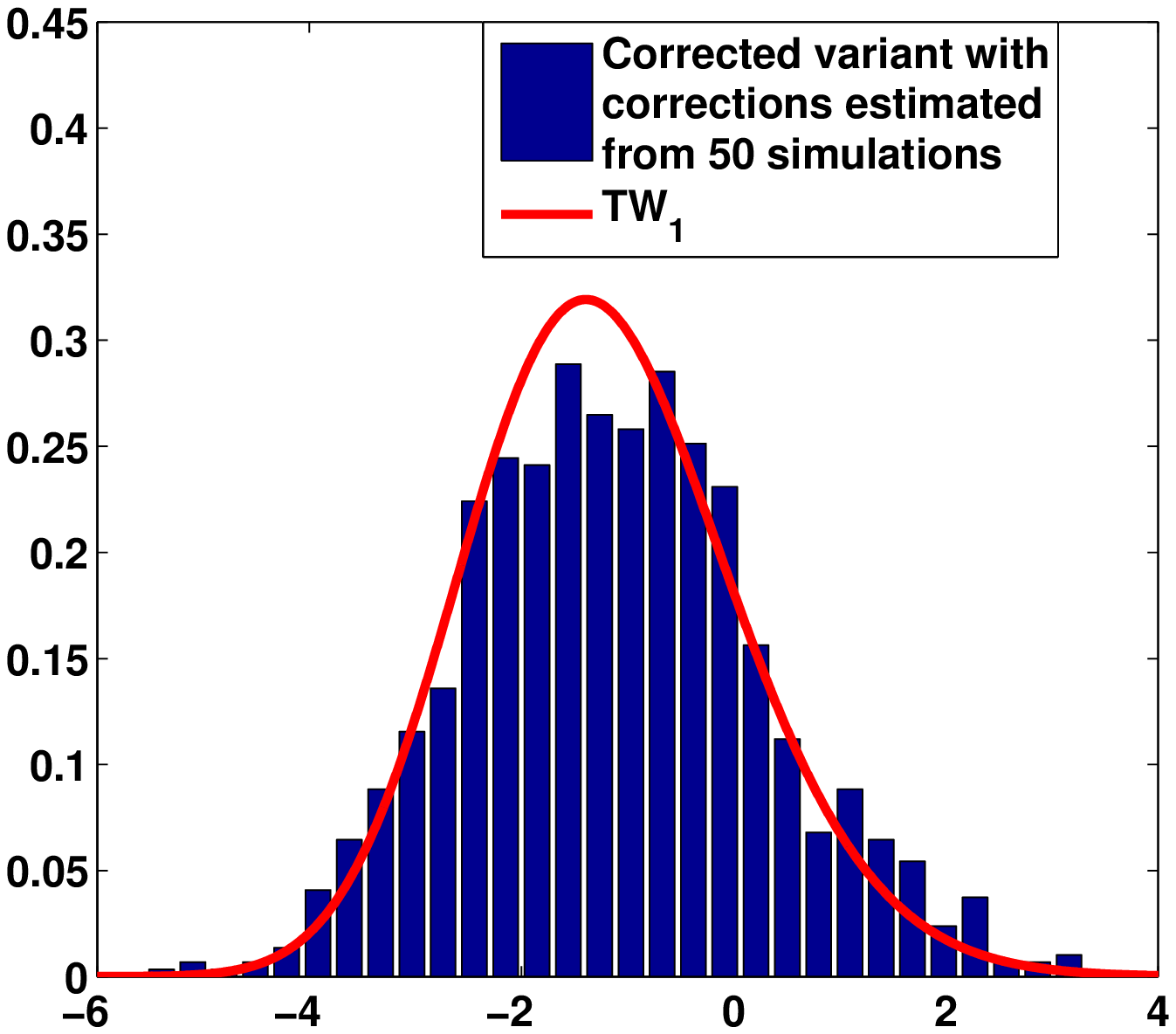}\\
(A)&(B)&(C)\\
\includegraphics[width=0.42\linewidth]{er_200_05}&
\includegraphics[width=0.4\linewidth]{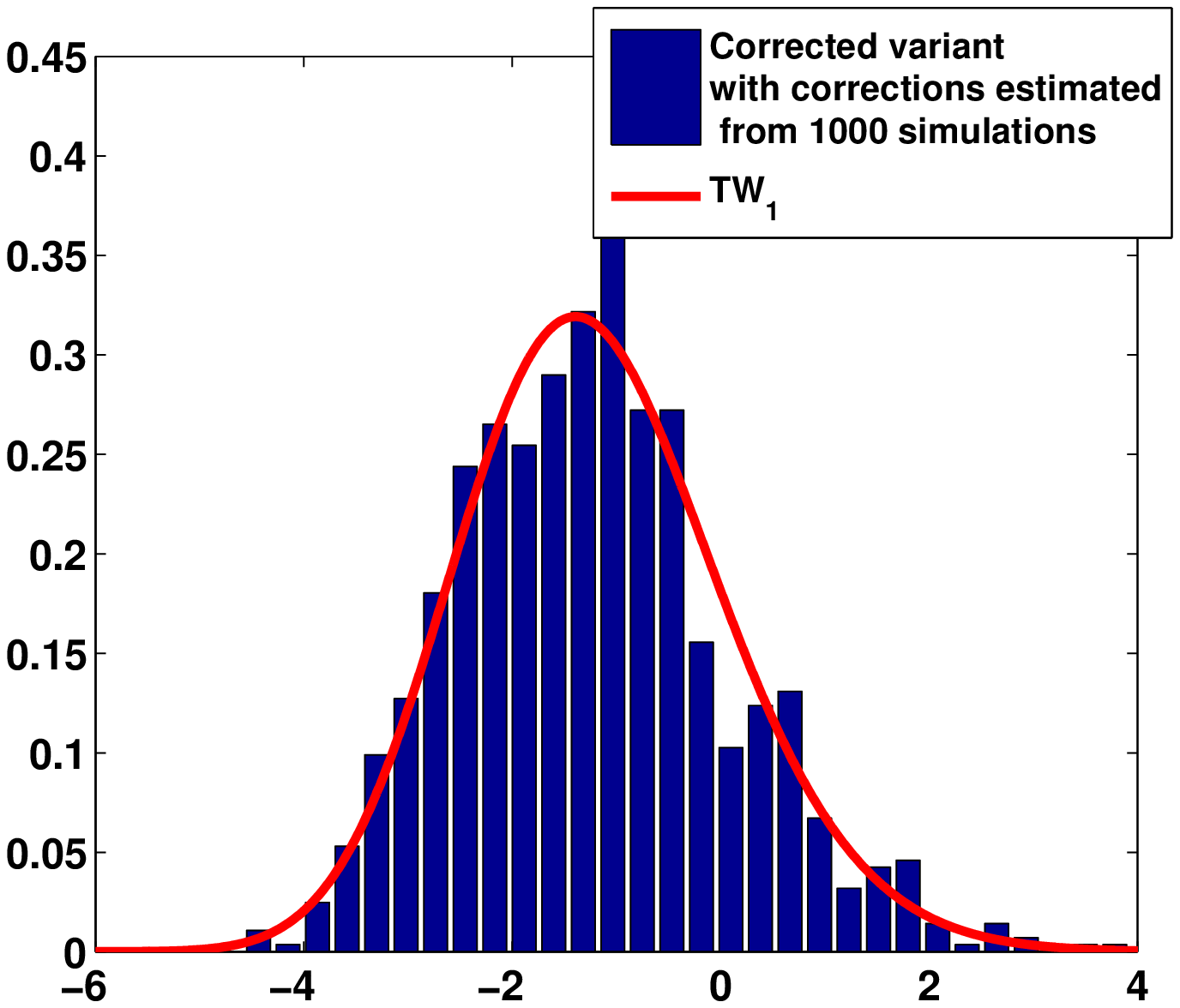}&
\includegraphics[width=0.4\linewidth]{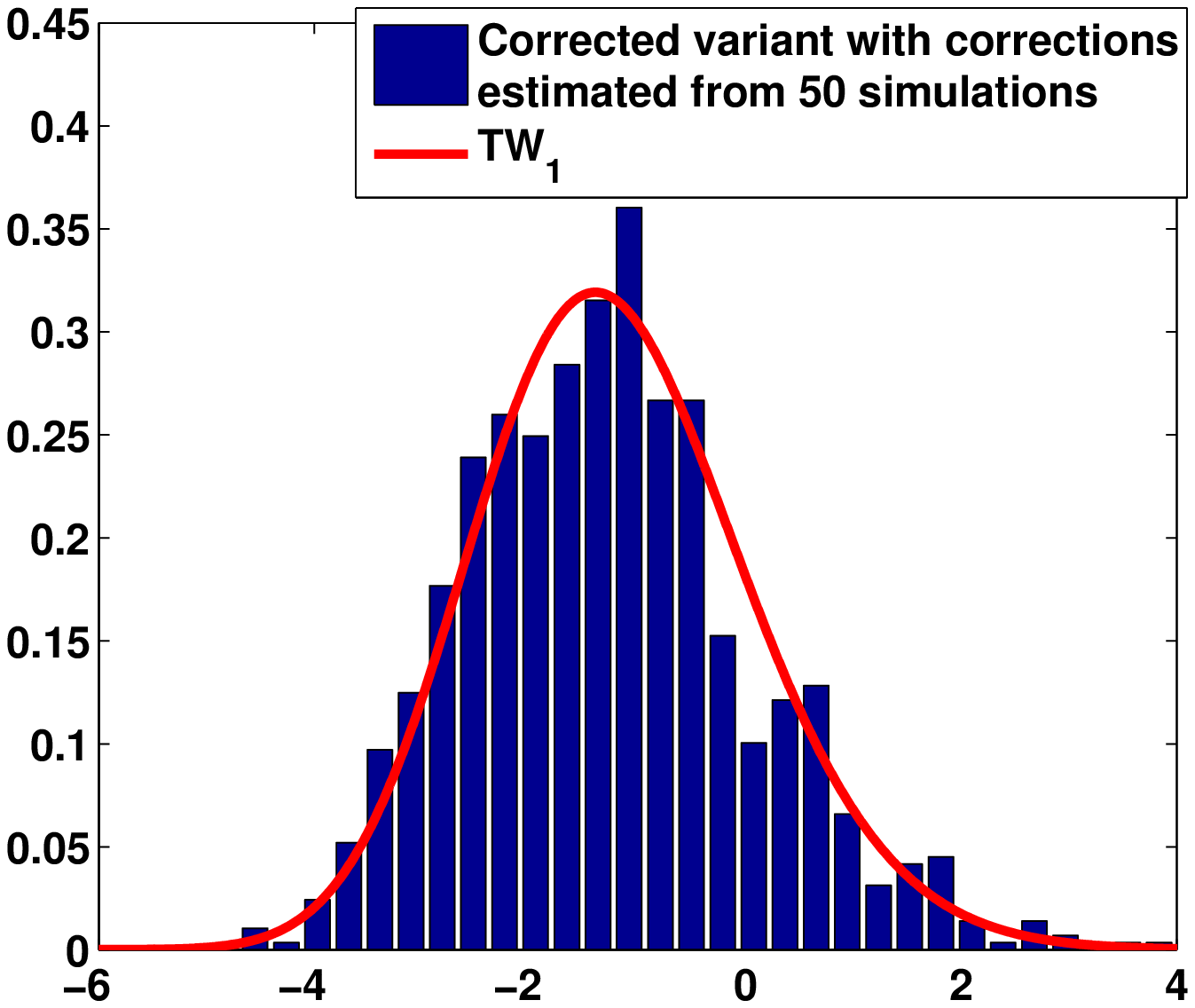}\\
(D)&(E)&(F)\\
\end{tabular}
\end{figure}

In Figure~\ref{fig:faster_estimation} we plot the empirical distribution of a thousand bootstrap replicates. The leftmost panels show how the empirical distribution of $\lambda_1$ differs from the limiting $TW_1$ law. In the middle panel we show the shifted and scaled version of this empirical distribution, where the mean and variance of the empirical distribution are estimated using a thousand samples drawn from the respective \er models. One can see that the middle panel is a much better fit to the Tracy-Widom distribution. Finally in the third panel we have the corrected empirical distribution where the mean and variance were estimated from fifty random samples. While this is not as good a fit as the middle panel, it is not much worse.

We would like to note that these corrections are akin to Bartlett type corrections~\citep{bartlett37} to likelihood ratio tests, which propose a family of limiting distributions, all scaled variants of the well-known chi-squared limit, and estimate the best fit using the data at hand. %To our knowledge this is the first attempt to obtain such small sample corrections to the Tracy-Widom law in the context of \er graphs.
Now we present the hypothesis test formally. $E_{TW_1}[.]$, $\var_{TW_1}[.]$ and $P_{TW_1}[.]$ denote expectation, variance and probability of an event under the $TW_1$ law respectively.
\begin{algorithm}[!htb]
  \caption{Hypothesis Test
    \label{alg:hptest}}
  \begin{algorithmic}[1]
 %   \Require{$x$ and $y$ are packed DNA strings of equal length $n$}
%    \Statex
    \Function{HypothesisTest}{$\theta$,$n$,$\ph$}
    \Let{$\mu_{TW}$}{$E_{TW_1}[X]$}
        \Let{$\sigma_{TW}$}{$\sqrt{\var_{TW_1}[X]}$}
      \For{i=1:50}
      \Let{$A_i$}{\er(n,\ph)}
      \Let{$\theta_i$}{$\dfrac{\lambda_1(A-\hat{P})}{\sqrt{(n-1)\ph(1-\ph)}}$}
      \EndFor
      \Let{$\hat{\mu}_{n,\ph}$}{\textsc{mean}($\{\theta_i\}$)}
      \Let{$\hat{\sigma}_{n,\ph}$}{\textsc{standard deviation}($\{\theta_i\}$)}
      \Let{$\theta'$}{$\mu_{TW}+\left(\dfrac{\theta-\hat{\mu}_{n,\ph}}{\hat{\sigma}_{n,\ph}}\right)\sigma_{TW}$}
      \Let{pval}{$P_{TW_1}(X>\theta')$}
    \EndFunction
  \end{algorithmic}
\end{algorithm}

\paragraph{\textsc{Relationship to~\citet{levina_pnas}}.}
We conclude this section with a brief discussion of the similarities and differences of our work with the method in~\citep{levina_pnas}. The main difference is that their paper is focussed on finding and extracting communities which maximize a ratio-cut type criterion. We on the other hand do not prescribe a clustering algorithm. The clustering step in Algorithm~\ref{alg:recursive} is not tied to our hypothesis test and can easily be replaced by their community extraction algorithm. Computationally, our hypothesis testing step is faster, because we avoid the expensive parametric bootstrap to estimate the distribution of their statistic. This is possible because the limiting distribution is provably Tracy-Widom, and small sample corrections can be made cheaply by generating fewer bootstrap samples.  Finally, a superficial difference is that the  authors do a sequential extraction; the hypothesis test is applied sequentially on the complement of the communities extracted so far. We on the other hand, find the communities recursively, thus leading to a natural hierarchical clustering.
Thus if there are nested community structure inside an extracted community, this sequential strategy would miss that. We also demonstrate this in our simulated experiments.

We conclude this subsection with a remark on alternative hypothesis tests. In the context of a \sbm, one can use simpler statistics which exploit the i.i.d. structure of edges in each block of the network. For example $y_i:=\sqrt{n}((\lsum_j (A_{ij}-d_i/(n-1))^2/n)-\ph(1-\ph))$, $i=1,\dots,n$, have a limiting mean zero gaussian distribution under the \er model, and hence $\theta:=\max_i y_i$ should converge to a Gumbel distribution. Under a \sbm, $\theta$ will diverge to infinity because of the wrong centering. However, the hypothesis test with the principal eigenvalue worked much better in practice. The second eigenvalue of the Laplacian behaved similarly to our test. We would also like to point out that~\citet{ER2-erdos} show that the second largest eigenvalue of $A$ (with self loops), suitably centered and scaled, converges to the $TW_1$ law. This probably can also be used to design a hypothesis test by adjusting their proof technique. We would also like to note that it may be possible to design
hypothesis tests that use the limiting behaviors of the number of
paths or cycles in \er graphs using limiting results from~\citet{phase-transition-rand-strctures}.

\subsection{Conjecture on the Normalized Laplacian matrix}
We conclude this section with a conjecture on the second largest eigenvalue of the graph Laplacian matrix. Like~\citet{rohe_chatterji_yu} we will adopt the following definition of the Laplacian. Let $L:=D^{-1/2}AD^{-1/2}$, where $D$ is the diagonal matrix of degrees, i.e. $D_{ii}=\sum_j A_{ij}$.
\begin{conj}
Let $A$ be the adjacency matrix of an \er $G_{n,p}$ graph, and let $L:=D^{-1/2}AD^{-1/2}$ denote the normalized Laplacian. If $p$ is a fixed constant w.r.t $n$, we have:
\begin{align*}
n^{2/3}\bb{\sqrt{\frac{n\ph}{1-\ph}}(\lambda_2(L)+1/n)-2}\stackrel{d}{\rightarrow} TW_1
\end{align*}
\end{conj}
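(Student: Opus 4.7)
The plan is to reduce $\lambda_2(L)$ to an edge eigenvalue of a Wigner-type matrix to which the edge-universality result underlying Theorem~\ref{thm:TW} applies. First, to pinpoint where the curious $+1/n$ correction comes from, observe that $L\,D^{1/2}\mathbf{1}=D^{1/2}\mathbf{1}$, so $\lambda_1(L)=1$ exactly, and since $A_{ii}=0$ we have $\mathrm{tr}(L)=0$. Hence $\sum_{i\ge 2}\lambda_i(L)=-1$: the remaining $n-1$ eigenvalues of $L$ are centered around $-1/(n-1)\approx -1/n$ rather than $0$. The relevant right edge of the bulk therefore lies near $-1/n+2\sqrt{(1-\hat p)/(n\hat p)}$, and multiplying by $\sqrt{n\hat p/(1-\hat p)}$ rescales that edge to $2$ and places the Tracy--Widom fluctuations on the scale $n^{-2/3}$, which is exactly what the conjecture asserts.

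Next, I would expand $L$ around $B:=A/((n-1)p)$. Let $\Delta:=D-(n-1)pI$ and $\delta:=\Delta/((n-1)p)$; Bernstein's inequality gives $\max_i|\delta_{ii}|=O_P(\sqrt{\log n/(np)})$. Expanding $(I+\delta)^{-1/2}=I-\tfrac{1}{2}\delta+\tfrac{3}{8}\delta^2-\cdots$ and using that $\delta$ is diagonal yields
\begin{align*}
L=(I+\delta)^{-1/2}B(I+\delta)^{-1/2}=B-\tfrac{1}{2}(\delta B+B\delta)+\tfrac{1}{4}\,\delta B\delta+\tfrac{3}{8}(\delta^2 B+B\delta^2)+O_P((np)^{-3/2}).
\end{align*}
The edge-universality result underlying Theorem~\ref{thm:TW} gives $\lambda_1(\widetilde A)=2+n^{-2/3}\mathrm{TW}_1+o_P(n^{-2/3})$ for $\widetilde A=(A-P)/\sqrt{(n-1)p(1-p)}$, which in turn pins down the largest eigenvalue of $B$ orthogonal to its Perron direction. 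The aim is to show that, after projecting away the one-dimensional subspace spanned by $D^{1/2}\mathbf{1}$, the perturbation $L-B$ contributes only the deterministic shift $-1/n$ identified above, with all remaining stochastic fluctuations being $o_P(n^{-2/3}/\sqrt{np})$.

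The main obstacle is that $\|\delta B+B\delta\|_{\mathrm{op}}$ is of order $1/\sqrt{np}$, exceeding the Tracy--Widom scale $n^{-2/3}/\sqrt{np}$ by a factor of $n^{2/3}$, so a naive operator-norm perturbation bound is useless. This bound is however not tight: since $\delta_{ii}$ equals the centered row sum of $A$ divided by $(n-1)p$, the matrix $\delta B+B\delta$ has most of its mass along the Perron direction of $B$, whereas $\lambda_2(L)$ is determined by the behavior of $L$ on the complementary $(n-1)$-dimensional subspace. Making this precise requires an isotropic local law for the resolvent of $L$ near its edge, in the spirit of the local laws of Erd\H{o}s--Knowles--Yau--Yin for Wigner ensembles, followed by a Green-function comparison argument that transfers the Tracy--Widom limit from $\widetilde A$ to $L$. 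The dependence between $D$ and $A$ breaks the usual Wigner independence structure and is the hardest ingredient; I would handle it via a Schur-complement expansion that, for each $i$, isolates the dependence of $\Delta_{ii}$ and of the relevant resolvent entries on the $i$-th row of $A$. Once such a local law and rigidity of edge eigenvalues of $L$ are in hand, the $\mathrm{TW}_1$ limit for $\lambda_2(L)$ follows from the corresponding limit for $\lambda_1(\widetilde A)$ together with the deterministic $-1/n$ shift computed in the first step.
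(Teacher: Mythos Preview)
The statement you are attempting to prove is labeled in the paper as a \emph{Conjecture}, and the paper does not prove it. The authors supply only a one-paragraph heuristic: they write $\lambda_2(L)=\max_{y\perp \xv_1}y^T(L-\xv_1\xv_1^T)y$, approximate $\xv_1\approx \e$ and $L_{ij}\approx A_{ij}/(n\ph)$ to first order so that $L-\xv_1\xv_1^T\approx (A-n\ph\,\e\e^T)/(n\ph)$, appeal to Theorem~\ref{thm:TW} and the isotropic delocalization of the eigenvectors of $A-n\ph\,\e\e^T$, and then state explicitly that they are ``currently working on proving this conjecture.'' There is therefore no proof in the paper to compare your attempt against.

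Your outline is more detailed than the paper's heuristic and in some respects sharper. The trace identity $\mathrm{tr}(L)=0$ together with $\lambda_1(L)=1$, giving $\sum_{i\ge2}\lambda_i(L)=-1$, is a clean motivation for the $+1/n$ shift that the paper never explains. Your Taylor expansion of $(I+\delta)^{-1/2}$ and your diagnosis of the central obstacle---that $\|\delta B+B\delta\|_{\mathrm{op}}$ is of order $(np)^{-1/2}$, which swamps the target Tracy--Widom scale $n^{-2/3}(np)^{-1/2}$, so a Weyl-type perturbation bound is useless---are exactly right. The paper's heuristic sidesteps this by asserting the first-order approximation $L\approx A/(n\ph)$ without controlling the error at the required precision; that uncontrolled error is precisely where the conjecture lives.

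That said, your proposal is not a proof either: you defer the hard step to an isotropic local law and a Green-function comparison for $L$, and you correctly note that the dependence between $D$ and $A$ breaks the usual Wigner independence structure. Neither you nor the paper closes this gap; both are programs rather than proofs.
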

The intuition behind this conjecture is that the second eigenvalue of $L$ can be thought of as  $\max_{y\perp \xv_1}y^T(L-\xv_1\xv_1^T)y$. Here $\xv_1$ is the eigenvector corresponding to eigenvalue one. It is easy to show that $\xv_1(i)= \sqrt{D_{ii}/E}$, where $E=\sum_i D_{ii}$. In fact, using simple Chernoff-bound type arguments, one can show that $D_{ii}/E$ concentrates around $1/n$. On the other hand, elements of $L_{ij}$ can be approximated in the first order by $A_{ij}/n\hat{p}$. Thus the difference $L-\xv_1\xv_1^T$, can be approximated by $(A-n\ph\e\e^T)/n\ph$. We show that the largest eigenvalue of this matrix has the limiting $TW_1$ distribution, after suitable scaling and shifting.  Moreover, eigenvectors of $A-n\ph\e\e^T$ corresponding have been shown~\citep{knowles-Yin-general} to be almost orthogonal to the all ones vector, which is a close approximation of $\xv_1$ for \er graphs. We are currently working on proving this conjecture. %We would also like to point out that~\cite{er2} show that the second largest eigenvalue of $A$, suitably centered and scaled, converges to the $TW_1$ law. This can also be used to design a hypothesis test.

\begin{figure}
\caption{\label{fig:laplacian-simulations}Simulations using the statistic obtained from the Normalized Laplacian matrix. (A) n=500, p=0.5 (B) n=50, p=0.5, (C) n=200, p=0.05}
\begin{tabular}{@{\hspace{-1em}}c@{\hspace{-1em}}c@{\hspace{-1em}}c}
\includegraphics[width=0.4\linewidth]{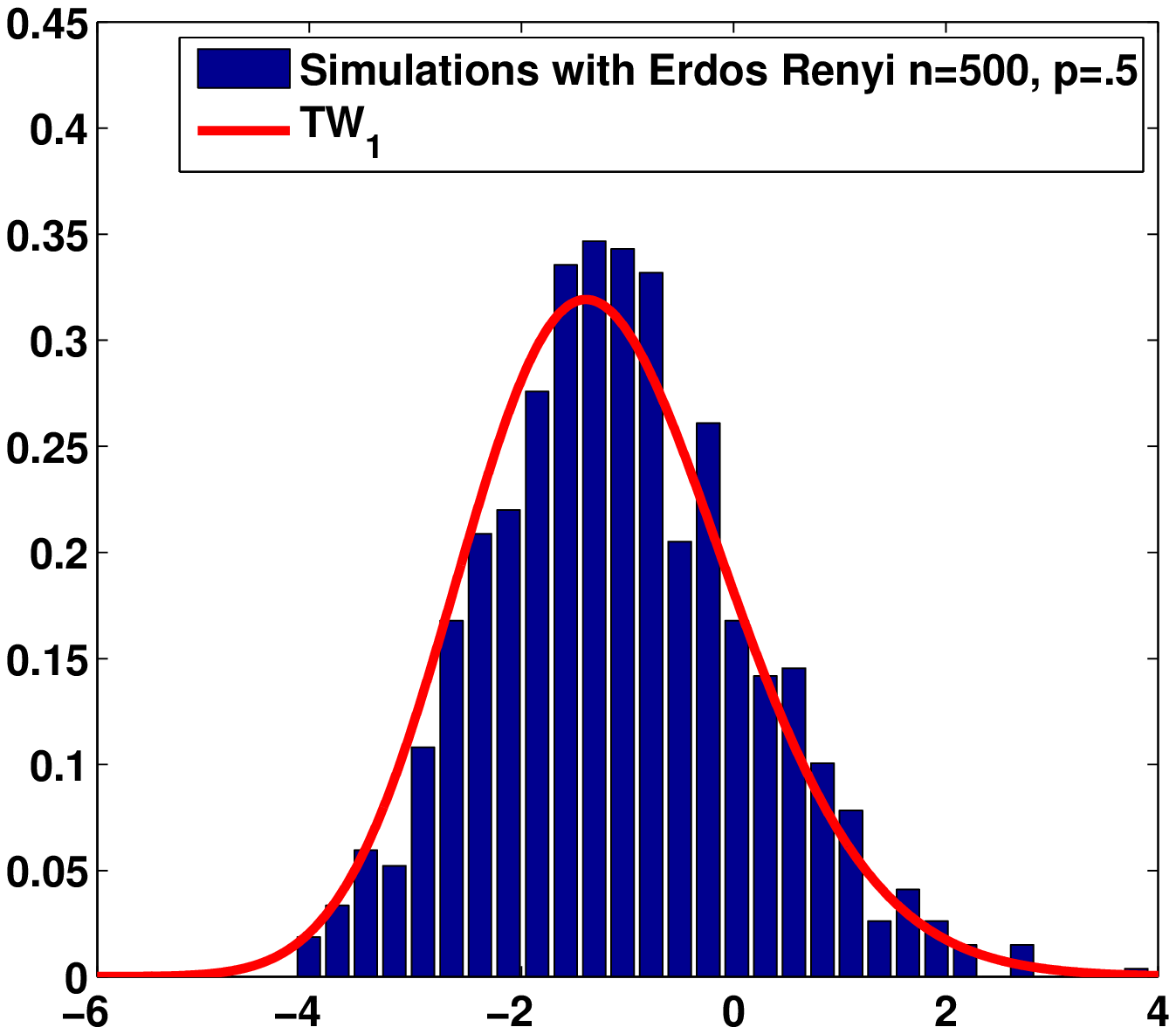}&\includegraphics[width=0.4\linewidth]{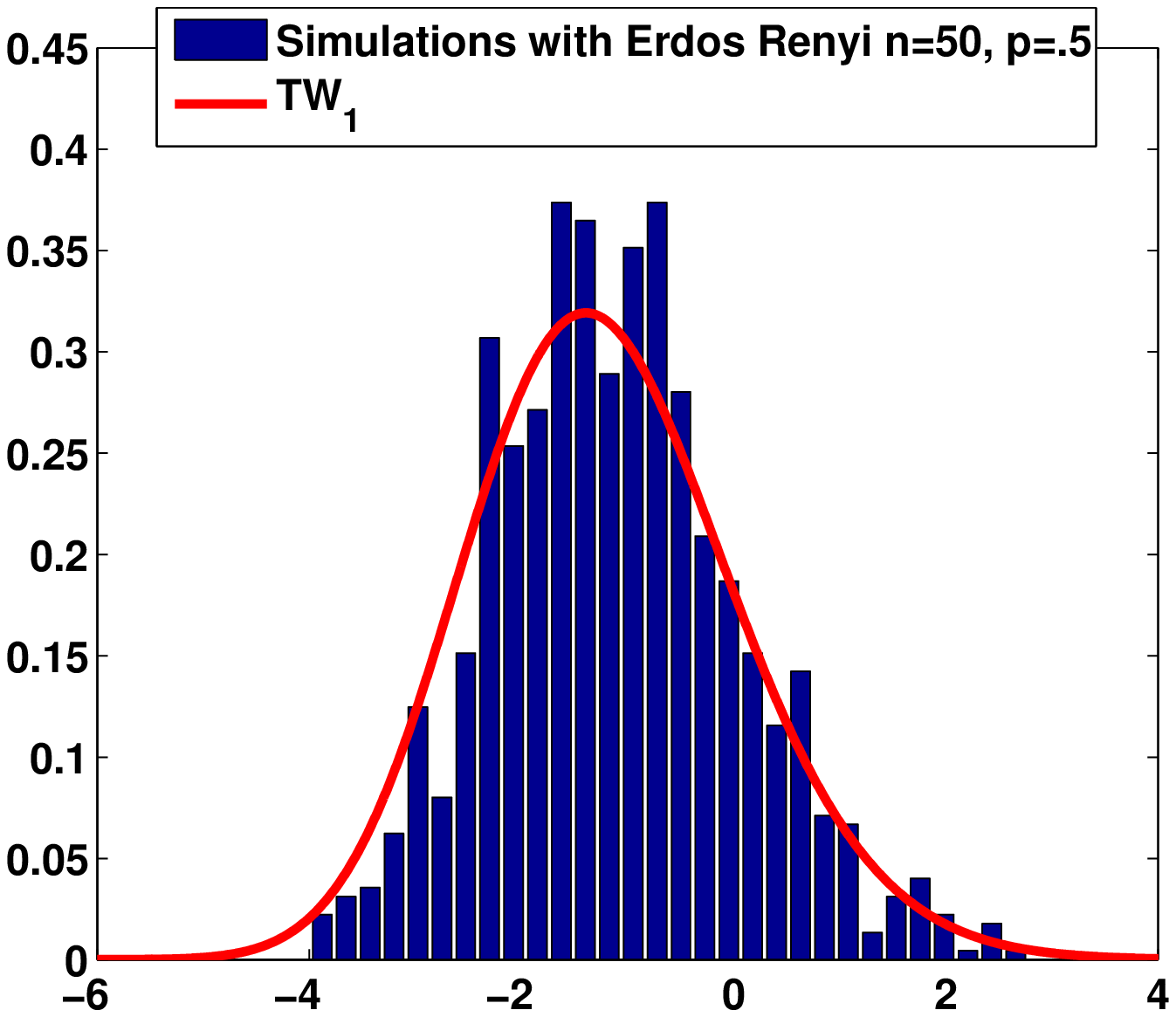}&\includegraphics[width=0.4\linewidth]{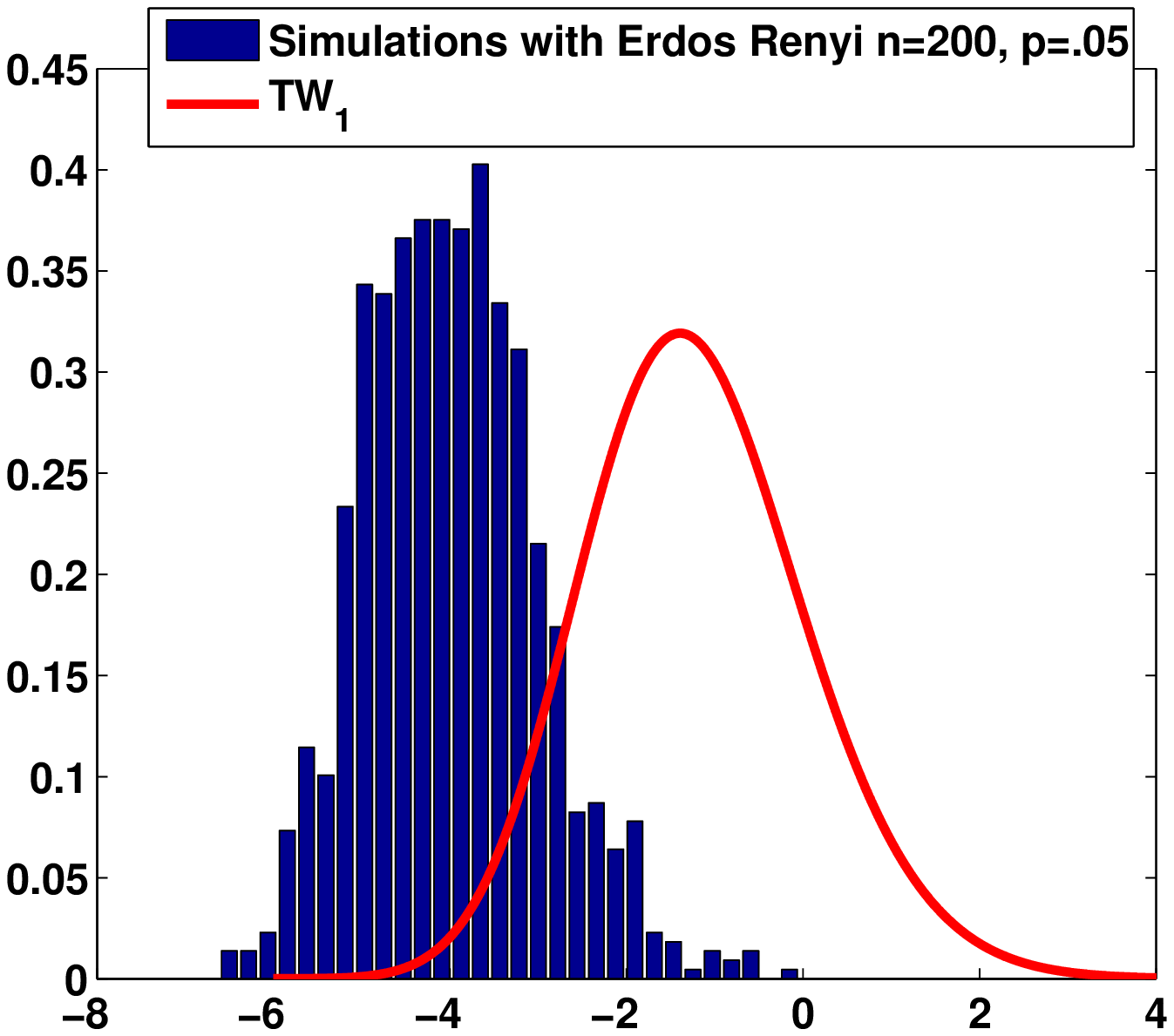}\\
(A)&(B)&(C)\\
\end{tabular}
\end{figure}

Figure~\ref{fig:laplacian-simulations} shows the fit of the statistic obtained from $L$ with the $TW_1$ law. Figures~\ref{fig:laplacian-simulations}(A) and (B) show that for dense graphs the statistic using $L$ converges to the limiting law faster than the corresponding statistic using the adjacency matrix $A$.  However, Figure~\ref{fig:laplacian-simulations}(C) shows that for sparse graphs, convergence is slow, similar to the adjacency matrix case. Experimentally, we saw that the same correction using the data leads to better fit for this case as well.
\section{Proof of Main Result}
In this section we will present the proof of Theorem~\ref{thm:TW}. Our proof uses the following machinery developed in random matrix theory in recent years.
Recently~\citet{erdos-rigidity} have proved that eigenvalues of general symmetric Wigner ensembles follow the local semicircle law. In particular, in the bulk, it is possible to estimate the empirical eigenvalue density using the semicircle law.
\begin{res}[Equation 2.26 in~\citet{erdos-rigidity}]
\label{res:local-semicircle}
Let $\lambda_1\geq\lambda_2\geq \dots\lambda_n$ be the eigenvalues of $\n{A}$ (Equation~\ref{eq:def-mat}). Also let $p$ be a constant w.r.t $n$. We define the following empirical and population quantities:
\begin{align*}
\mathcal{N}(a,b)=|i:a<\lambda_i\leq b|\qquad{\mathcal{N}_{sc}}(a,b):=n\int_{a}^{b}\rho_{sc}(x)dx,
\end{align*}
where $\rho_{sc}$ is defined in Equation~\ref{eq:semicircle}.
There exists positive constants $A_0>1$, $C$, $c$ and $d<1$, such that for any $L$ with
\begin{align}
\label{eq:L}
A_0\log\log n\leq L\leq \log(10n)/\log\log n,
\end{align}
we have:
$$P\bb{\sup_{|E|\leq 5}|\mathcal{N}(-\infty,E)-\mathcal{N}_{sc}(-\infty,E)|\geq (\log n)^L}\leq C\exp[-c(\log n)^{-dL}]$$
for sufficiently large $n$.
\end{res}
We will use the above result to obtain a probabilistic upper bound on the local eigenvalue density.
We note that, for $|a|,|b|\leq 5$,
{\small
\begin{align}
&P\bb{|\mathcal{N}(a,b)-\mathcal{N}_{sc}(a,b)|\geq 2(\log n)^L}\nonumber\\
&\leq P\bb{|\mathcal{N}(-\infty,b)-\mathcal{N}_{sc}(-\infty,b)|\geq (\log n)^L}+P\bb{|\mathcal{N}(-\infty,a)-\mathcal{N}_{sc}(-\infty,a)|\geq (\log n)^L}\nonumber\\
&\leq 2C\exp[-c(\log n)^{-dL}]\label{eq:local-density-er1}
\end{align}
}

We will also use the result the following probabilistic upper bound on the largest absolute eigenvalue:
\begin{res}[Equation 2.22,~\citet{erdos-rigidity}]
There exists positive constants $A_0>1$, $C$, $c$ and $d<1$, such that for any $L$ satisfying Equation~\ref{eq:L} we have,
\label{res:maxeig}
$$P\bb{\max_{j=1,\dots,n}|\lambda_j|\geq 2+n^{-2/3}(\log n)^{9L}}\leq C\exp[-c(\log n)^{dL}]$$
\end{res}

% \citet{Knowles-Yin-eigvec-proj} prove the following result for eigenvectors of $\na$ at the edge. Before presenting their result,  we define the $\n{O}_P$ notation for a sequence of random variables which are bounded in probability by a slowly growing function $\varphi$ of $n$. %\purna{Is this slowly varying? Or regularly varying?}
%\begin{defn}
%Let $\vv_i$ be the $i^{th}$ eigenvector of \na corresponding to the $i^{th}$ largest eigenvalue $\lambda_i$. Let $\varphi$ be a slowly growing function of $n$. If $p$ is a constant w.r.t $n$, we have:
%$$X_n=\n{O}_P(1)\qquad\mbox{Iff  $\forall\epsilon>0$, $\exists k<\infty: P(|X_n|/\varphi^k\geq 1)<\epsilon$, $\forall n$.}$$
%\end{defn}

First we will state the necessary and sufficient condition for the Tracy-Widom limit of the extreme eigenvalues of a generalized Wigner matrix.

\begin{res}[Theorem 1.2, ~\citep{necessary-sufficient-tw}]
\label{res:nec-suf-tw}
Define a symmetric Wigner matrix $H_n$ of size $n$ with
\begin{align}
H_{ij}=h_{ij}=\frac{x_{ij}}{\sqrt{n}},\mbox{  $1\leq i,j\leq n$}.
\end{align}
The upper triangular entries are independent real random variables with mean zero satisfying the following conditions:
    \vspace{-1em}
\begin{itemize}
\item The off diagonal entries $x_{ij}$ ($1\leq i<j\leq n$) are i.i.d random variables satisfying $E[x_{12}]=0$ and $E[x_{12}^2]=1$.
\item The diagonal entries $x_{ii}$, ($1\leq i\leq n$) are i.i.d. random variables satisfying $E[x_{11}]=0$ and $E[x_{11}^2]<\infty$.
\end{itemize}
Also consider the simple criterion:
\begin{align}
\label{eq:nec-suff-cond}
\lim_{s\rightarrow\infty}s^4P(|x_{12}\geq s|)=0
\end{align}
Then, the following holds:
\begin{itemize}
\item Sufficient condition: if condition~\ref{eq:nec-suff-cond} holds, then for any fixed $k$, the joint distribution function of $k$ rescaled largest eigenvalues $\lambda_1(H),\dots,\lambda_n(H)$,
    \begin{align}
    \label{eq:joint-eigval}
    P\bb{n^{2/3}(\lambda_1(H)-2)\leq s_1,\dots,n^{2/3}(\lambda_k(H)-2)\leq s_k}
    \end{align}
    has a limit as $n\rightarrow\infty$, which coincides with the GOE case, i.e. it weakly converges to the Tracy-Widom distribution.
    \vspace{-1em}
\item Necessary condition: if condition~\ref{eq:nec-suff-cond} does not hold, then the distribution function in Equation~\ref{eq:joint-eigval} does not converge to a Tracy-Widom distribution. Furthermore, we have:
    $\limsup_{n\rightarrow\infty}P(\lambda_1(H)\geq 3)>0.$
\end{itemize}
\end{res}

Our definition of $\n{A}$ was designed to match the conditions required for Result~\ref{res:local-semicircle}. However, it is easy to see matrix $H:=\sqrt{(n-1)/n} \n{A}$ matches the setting in Result~\ref{res:nec-suf-tw}. Because $x_{12}$ in this case is a centered Bernoulli, condition~\ref{eq:nec-suff-cond} trivially holds. Thus we have  $\lambda_1(H)=2+n^{-2/3}TW_{1}+o_P(n^{-2/3})$. However, the $\sqrt{n/(n-1)}$ factor scales the eigenvalues by $1+O(1/n)$, which does not mask the $n^{-2/3}$ coefficient on the Tracy-Widom law. Thus we also have:
\begin{align}
\label{eq:TW}
\lambda_1=2+n^{-2/3}TW_1+o_P(n^{-2/3})
\end{align}

 \citet{knowles-Yin-general} prove the following isotropic delocalization result for eigenvectors of generalized Wigner matrices. %$\na$ at the edge. Before presenting their result,
  we define the $\n{O}_P(\zeta)$ notation (denoted by $\prec$ in the original paper), for a sequence of random variables which are bounded in probability by a positive random variable $\zeta$ up-to small powers of $n$. %\purna{Is this slowly varying? Or regularly varying?}
\begin{defn}
\label{def:nO}
 %Let $\varphi$ be a slowly growing function of $n$.
%If $p$ is a constant w.r.t $n$, we have:
We define $X_n=\n{O}_P(\zeta)$, Iff
$$  \mbox{$\forall$ (small) $\epsilon$, and (large) $D>0$, $P(|X_n|/n^\epsilon\geq \zeta)<n^{-D}$, $\forall n\geq N_0(\epsilon,D)$.}$$
\end{defn}
%\begin{res}
%\label{res:eigvec-proj}
%Fix $\zeta>0$.
%Let $\phi:=(\log n)^{\log\log n}$.
%For some constant $C_0>0$, if $i\not\in[\phi^{-4C_0}n^{1/2},n-\phi^{-4C_0}n^{1/2}]$, there exists a constant $C_\zeta$ such that the following is true with $\zeta$ high probability.
%\begin{align*}
%(\e^T \vv_i)^2\leq \frac{\phi^{C_\zeta}}{n}
%\end{align*}
%\end{res}
\begin{res}[Theorem 2.16,~\citet{knowles-Yin-general}]
\label{res:eigvec-proj}
Let $H=H^T$ be a generalized real symmetric Wigner matrix whose elements are independent random variables with the following conditions:
$E H_{ij}=0$, $E[H_{ij}]^2=:s_{ij}$, with
\begin{align}
\label{eq:cond-knowles-general}
1/C\leq ns_{ij}\leq C \qquad \sum_j s_{ij}=1
\end{align}
for some constant $C>0$. All moments of the entries are finite in the sense that for all $p\in\mathbb{N}$, there exists a constant $C_p$ such that $E|\sqrt{n}H_{ij}|^p\leq C_p$.

Let $\vv_i(H)$ be the $i^{th}$ eigenvector of $H$ corresponding to the $i^{th}$ largest eigenvalue $\lambda_i(H)$. For any deterministic vector $\wv$, we have:
\begin{align}
|(\wv^T\vv_i(H))^2|=\n{O}_P(1/n)
\end{align}
uniformly for all $i=1,\dots,n$.
%Fix $\zeta>0$. Let $\phi:=(\log n)^{\log\log n}$.
%Let $\varphi$ be defined as follows:
%\begin{align}
%\label{eq:phi}
%\varphi:=\log n^{\log\log n}
%\end{align}
%For some constant $C_0>0$, if $i\not\in[\varphi^{-4C_0}n^{1/2},n-\varphi^{-4C_0}n^{1/2}]$, %there exists a constant $C_\zeta$ such that the following is true with $\zeta$ high probability.
%\begin{align*}
%(\e^T \vv_i)^2=\n{O}_P\bb{\frac{1}{n}}
%\end{align*}
\end{res}

We want to note that, since we do not allow self loops, for $\n{A}$, $s_{ii}=0$ for all $i=1,\dots, n$. Hence the first half of condition~\ref{eq:cond-knowles-general} does not hold. In order to relax this condition, we note that this result is proven using the isotropic local semicircle law (Theorem 2.12 in~\citet{knowles-Yin-general}), which is a direct consequence of the local entry-wise semicircle law (Theorem 2.13 in the same). However the entry-wise semicircle law from recent work of~\citet{knowles-Yin-general2} (Theorem 2.3) applies to our setting, and by using this instead of Theorem 2.13 in the chain of arguments in~\citep{knowles-Yin-general}, we can apply Result~\ref{res:eigvec-proj} to eigenvectors of $\n{A}$. Let $\vv_i$ be the eigenvector of $\n{A}$ corresponding to its $i^{th}$ largest eigenvalue $\lambda_i$. Let $\e$ be the $1/\sqrt{n}(1,\dots,1)$ vector. We have:
\begin{align}
\label{eq:knowles-yin-application}
|(\e^T\vv_i)^2|=\n{O}_P(1/n)
\end{align}
uniformly for all $i=1,\dots,n$.

%M=n-1
%\delta_+,\delta_->0 and so \tilde{Gamma}(z)<C(log n) from A.3 in \citep{knowles-Yin-general2}

We will now present Weyl's Interlacing Inequality, which would be used heavily in our proof.
\begin{res}%[Weyl's Interlacing Inequality]
\label{res:interlace}
Let $B_1$ be an $n\times n$ real symmetric matrix and $B_2=B_1+d\xv\xv^T$, where $d>0$ and $\xv\in \mathbb{R}^n$. Denoting the $i^{th}$ largest eigenvalue of matrix $(.)$ by $\lambda_i(.)$ we have:
\begin{align}
\lambda_n(B_1) \leq \lambda_n(B_2)\leq \lambda_{n-1}(B_1)\leq  \dots \leq \lambda_{2}(B_2)\leq \lambda_1(B_1)\leq \lambda_1(B_2) \label{eq:interlaceone}
\end{align}
An immediate corollary of this result is that for $d<0$,
\begin{align}
\lambda_n(B_2) \leq \lambda_n(B_1)\leq \lambda_{n-1}(B_2)\leq  \dots \leq \lambda_{2}(B_1)\leq \lambda_1(B_2)\leq \lambda_1(B_1)%\mu_n \leq \lambda_n \dots \lambda_2\leq  \mu_1\leq \lambda_1
\label{eq:interlacetwo}
\end{align}
\end{res}
Let $\ph:=\sum_{ij}A_{ij}/n(n-1)$, and let $\e$ denote the normalized $n\times 1$ vector of all ones. As in Equation~\ref{eq:Phat}, $\hat{P}$ is the empirical version of $P$ (Equation~\ref{eq:P}). %, i.e. $\ph$ times the all ones matrix. $P$ is its population version, i.e. $P:=np\e\e^T$.
%\begin{lem}
%\label{lem:tw1}
%Let $\n{A}$ :=(A-P)/\sqrt{np(1-p)}$ and $\n{A}_1:=(A-\hat{P})/\sqrt{np(1-p)}$. Also let $\lambda_1\geq\lambda_2\geq \dots\lambda_n$ be the eigenvalues of \na and $\mu_1\geq\mu_2\geq \dots\geq \mu_n$ be the eigenvalues of \nao. We have:
%$$|\mu_1-\lambda_1|=\n{o}_P(1/n);\qquad\qquad |\mu_n-\lambda_n|=\n{o}_P(1/n)$$
%\end{lem}
\begin{lem}
\label{lem:tw1}
Let $\n{A}_1:=\n{A}+\frac{n(p-\ph)\e\e^T}{\sqrt{(n-1)p(1-p)}}$. Also let $\lambda_1\geq\lambda_2\geq \dots\lambda_n$ be the eigenvalues of \na and $\mu_1\geq\mu_2\geq \dots\geq \mu_n$ be the eigenvalues of \nao. If $p$ is a constant w.r.t $n$, we have:
$$|\mu_1-\lambda_1|=o_P(1/n)$$%;\qquad\qquad |\mu_n-\lambda_n|=o_P(1/n)$$
\end{lem}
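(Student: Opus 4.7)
The perturbation $\nao - \na = c\,\e\e^T$ is rank one with scalar $c := n(p - \ph)/\sqrt{(n-1)p(1-p)}$. Under the \er null, $\ph$ is the sample mean of $\binom{n}{2}$ independent Bernoulli$(p)$ trials, so $\ph - p = O_P(1/n)$ and therefore $c = O_P(n^{-1/2})$. The direct Weyl bound (Result~\ref{res:interlace}) only yields $|\mu_1 - \lambda_1| \leq |c| = O_P(n^{-1/2})$, which is a factor of roughly $\sqrt n$ too coarse. The improvement I would exploit is that the perturbation direction $\e$ is almost orthogonal to every eigenvector of $\na$: by Equation~\ref{eq:knowles-yin-application}, $(\e^T\vv_i)^2 = \n{O}_P(1/n)$ uniformly in $i$, where $\vv_i$ is the eigenvector of $\na$ associated with $\lambda_i$.

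The lower bound comes from plugging $\vv_1$ into the Rayleigh quotient of $\nao$:
\begin{align*}
\mu_1 \;\geq\; \vv_1^T \nao \vv_1 \;=\; \lambda_1 + c(\e^T\vv_1)^2 \;\geq\; \lambda_1 - |c|(\e^T\vv_1)^2 \;=\; \lambda_1 - \n{O}_P(n^{-3/2}).
\end{align*}
For the upper bound, the case $c \leq 0$ is immediate from Weyl since $c\,\e\e^T$ is then negative semidefinite. In the case $c > 0$ one has $\mu_1 > \lambda_1$, and writing the top eigenvector $\uv_1$ of $\nao$ through $(\na - \mu_1 I)\uv_1 = -c(\e^T\uv_1)\,\e$ and expanding in the eigenbasis $\{\vv_i\}$ gives the rank-one secular equation
\begin{align*}
1 \;=\; c\sum_{i=1}^n \frac{(\e^T\vv_i)^2}{\mu_1 - \lambda_i} \;=\; \frac{c(\e^T\vv_1)^2}{\mu_1 - \lambda_1} + c\sum_{i\geq 2}\frac{(\e^T\vv_i)^2}{\mu_1 - \lambda_i}.
\end{align*}
I would then bound the bulk sum term-by-term by $(\e^T\vv_i)^2/(\lambda_1 - \lambda_i)$ (using $\mu_1 > \lambda_1$) and factor out the uniform delocalization estimate,
\begin{align*}
\sum_{i\geq 2}\frac{(\e^T\vv_i)^2}{\lambda_1 - \lambda_i} \;\leq\; \Bigl(\max_i (\e^T\vv_i)^2\Bigr)\sum_{i\geq 2}\frac{1}{\lambda_1 - \lambda_i} \;=\; \n{O}_P(1/n)\cdot O_P(n) \;=\; \n{O}_P(1),
\end{align*}
where the $O_P(n)$ control of the trace-type sum follows from Results~\ref{res:local-semicircle}--\ref{res:maxeig} (the Stieltjes transform of the empirical spectral distribution at the edge converges to $-m_{sc}(2) = 1$). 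Multiplying by $c = O_P(n^{-1/2})$, the bulk contribution is $o_P(1)$, and isolating the first term of the secular equation gives
\begin{align*}
\mu_1 - \lambda_1 \;=\; c(\e^T\vv_1)^2\bigl(1 + o_P(1)\bigr) \;=\; \n{O}_P(n^{-3/2}) \;=\; o_P(1/n),
\end{align*}
which combined with the lower bound proves $|\mu_1 - \lambda_1| = o_P(1/n)$.

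The main technical obstacle I anticipate is making the $O_P(n)$ bound on $\sum_{i\geq 2}1/(\lambda_1 - \lambda_i)$ rigorous once one eigenvalue has been removed at the hard spectral edge: many neighboring $\lambda_i$ still sit at distance $\Theta_P(n^{-2/3})$ from $\lambda_1$, so a purely term-wise estimate gives nothing useful. The clean route is to write $\sum_i 1/(\lambda_1 + \eta - \lambda_i) = n\bigl(-m_{sc}(\lambda_1+\eta)\bigr) + o_P(n) = O_P(n)$ from the local semicircle law for any $\eta \downarrow 0$, observe that dropping the $i=1$ term only shrinks the sum, and use edge rigidity to make the limit uniform in $\eta$. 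These are exactly the same ingredients (the local semicircle law and the isotropic delocalization extension of Knowles--Yin to matrices with vanishing diagonal variance) that are already needed to justify Equation~\ref{eq:knowles-yin-application}, so no new input is required.
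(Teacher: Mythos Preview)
Your overall strategy matches the paper's: both exploit the rank-one structure through the secular equation $\e^T G(\mu_1)\e=1/c$ (the paper derives it via the resolvent identity $G^{-1}-G_1^{-1}=-c_n\e\e^T$), and both feed in the isotropic delocalization $(\e^T\vv_i)^2=\n{O}_P(1/n)$ together with the local semicircle law. The paper also splits into two cases by the sign of the perturbation; its easy case ($c_n>0$, your $c\le 0$) yields $\lambda_1-\mu_1\le c_n(\e^T\vv_1)^2$ directly from the secular equation, analogous to your Rayleigh-quotient lower bound.

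The genuine difference is in the hard case $c>0$ (the paper's $c_n<0$). You isolate the $i=1$ summand and control the rest by $\max_i(\e^T\vv_i)^2\cdot\sum_{i\ge 2}1/(\lambda_1-\lambda_i)$. The paper instead does a near/far split: with $S=\{i:\mu_1-\lambda_i\le 2|c_n|\}$, the far contribution is at most $1/(2|c_n|)$ since $\sum_i(\e^T\vv_i)^2=1$, which forces $\mu_1-\lambda_1\le 2|c_n|\sum_{i\in S}(\e^T\vv_i)^2$. Then Result~\ref{res:local-semicircle} is used only as an eigenvalue \emph{count}: an interval of length $O_P(n^{-1/2})$ at the right edge contains $|S|=O_P(n^{1/4})$ eigenvalues (by the square-root vanishing of $\rho_{sc}$), so $\sum_{i\in S}(\e^T\vv_i)^2=\n{O}_P(n^{-3/4})$ and the final bound is $\n{O}_P(n^{-5/4})$.

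Your route would give a sharper $\n{O}_P(n^{-3/2})$, but your sketch for $\sum_{i\ge 2}1/(\lambda_1-\lambda_i)=O_P(n)$ has a glitch worth fixing: the full sum $\sum_i 1/(\lambda_1+\eta-\lambda_i)$ contains the $i=1$ term $1/\eta$, so the local law does \emph{not} yield a finite limit as $\eta\downarrow 0$ for the full sum, and ``dropping $i=1$ only shrinks it'' is vacuous once the bound itself blows up. A workable fix is to integrate by parts against the counting function from Result~\ref{res:local-semicircle} (the integrand $\rho_{sc}(x)/(\lambda_1-x)\sim (2-x)^{-1/2}$ is integrable at the edge), or to invoke eigenvalue rigidity directly. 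The paper's near/far split avoids this issue entirely by using Result~\ref{res:local-semicircle} only in its counting form, at the price of a slightly weaker (but still $o_P(1/n)$) rate.
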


\begin{proof}
Let $\lambda_i,\vv_i$ be eigenvalues and eigenvectors of $\n{A}$, where $\lambda_i\geq \lambda_{i+1},i\in\{1,\dots,n-1\}$. Also, let $\mu_i,\uv_i$ be eigenvalues and eigenvectors of $\n{A}_1$, also arranged in decreasing order of $\mu$. %[Erdos et al] prove that for $i>1,\mu_i\in [\lambda_{i-1}, \lambda_{i}]$.
Let $G:=(\n{A}-zI)^{-1}$ and $G_1:=(\n{A}_1-zI)^{-1}$ be the resolvents of $\n{A}$ and $\n{A}_1$. %Denote by $\e$ the normalized vector of all ones of length $n$.
Let $c_n:=\sqrt{n}(\ph-p)/\sqrt{p(1-p)}\sqrt{n/(n-1)}$. We note that the matrices $\n{A}$ and $\n{A}_1$ differ by a random multiple of the all ones matrix.
\begin{align}
\label{eq:all-ones-dev}
\n{A}=\n{A}_1+c_n\e\e^T
\end{align}
The above equation also gives
\begin{align}
\label{eq:weyl-dev}
|\lambda_1-\mu_1|\leq |c_n| = O_P(1/\sqrt{n})
\end{align}

The above is true because \ph is the average of $n(n-1)/2$ i.i.d Bernoulli coins, and thus $|c_n|=O_P(1/\sqrt{n})$ for $p$ constant w.r.t $n$. However this error masks the $n^{-2/3}$ scale of the Tracy-Widom law. %However, we can use Equation~\ref{eq:weyl-dev}, and Result~\ref{res:maxeig} to note that, $|\mu_1|\leq 5$ with probability tending to zero.

%***********************
%\begin{align}
%%\label{eq:weyl-dev}
%|\lambda_1-\mu_1|\leq |c_n| = O_P(1/\sqrt{n})
%\end{align}

Equation~\ref{eq:all-ones-dev} also gives the following identity:
\begin{align}
&\e^T (G(z)-G_1(z))\e=-c_n (\e^T G(z)\e) (\e^T G_1(z)\e )\nonumber\\
&\frac{1}{\e^TG_1(z)\e}-\frac{1}{\e^TG(z)\e}=-c_n
\end{align}
Since $1/\e^T G_1(\mu_1)\e=0$, we have $\e^T G(\mu_1)\e=\frac{1}{c_n}$.
 Further, using Weyl's interlacing result~\ref{res:interlace} we see that the eigenvalues of $\na$ and $\nao$ interlace.
%
%\begin{align}
%\lambda_n \leq \mu_n \dots \mu_2\leq \lambda_1\leq \mu_1 &\qquad\qquad \mbox{If $c_n>0$}\label{eq:interlaceone}\\
%\mu_n \leq \lambda_n \dots \lambda_2\leq  \mu_1\leq \lambda_1 &\qquad\qquad \mbox{If $c_n<0$}\label{eq:interlacetwo}
%\end{align}
 Since $G(z)$'s eigenvalues and vectors are given by $1/(\lambda_i-z)$, and $\vv_i$ respectively, we have: %Note that $\e^TG(z)\e$ can also be written using its eigenvalues and eigenvectors
$$\frac{1}{c_n}=\e^TG(\mu_1)\e=\sum_i \frac{(\e^T\vv_i)^2}{\lambda_i-\mu_1}$$

Since the interlacing of eigenvalues depend on the sign of $c_n$, we will now do a case by case analysis.
\paragraph{Case $\mathbf{c_n>0}$:}
%Let $c_n^+$ be the positive part of $c_n$.
 In this case the interlacing result (Equation~\ref{eq:interlaceone}) tells us that $\lambda_1\geq \mu_1\geq\lambda_i$, $\forall i>1$. Thus we have,
 \begin{align}
%\sum_i \frac{(\vv_i^T\e)^2}{\mu_1-\lambda_i}=-\frac{1}{|c_n|}\\
&\frac{(\e^T\vv_1)^2}{\lambda_1-\mu_1}-\sum_{i>1}\frac{(\e^T\vv_i)^2}{\mu_1-\lambda_i}=\frac{1}{c_n}\nonumber\\
%&\frac{1}{c_n^{-}}\leq \frac{(\vv_1^T\e)^2}{\lambda_1-\mu_1}\nonumber\\%\sum_{i>1}\frac{(\e^T\vv_i)^2}{\mu_1-\lambda_i}\nonumber\\%-\frac{(\e^T\vv_1)^2}{\lambda_1-\mu_1}\nonumber\\
%&\frac{1}{|c_n|}\leq \frac{(\vv_1^T\e)^2}{\lambda_1-\mu_1}\\
&\lambda_1-\mu_1\leq c_n (\e^T\vv_1)^2\label{eq:bound-cn-neg}%=\n{O}_P\bb{\frac{1}{n^{3/2}}}
\end{align}

\paragraph{Case $\mathbf{c_n<0}$:}
%Let $c_n^+$ be the positive part of $c_n$.
 In this case the interlacing result (Equation~\ref{eq:interlacetwo}) tells us that $0\leq \mu_1- \lambda_1\leq\mu_1-\lambda_i$, $\forall i>1$. %$\mu_1-\lambda_1\leq \mu_1-\lambda_i,i>1$. %Hence , we can write:
We now divide the eigenvalues $\lambda_i$ into two groups, one with $\mu_1-\lambda_i\leq 2|c_n|$ (denoted by $S_{|c_n|}$), and $\mu_1-\lambda_i> 2|c_n|$. Since $\sum_i(\vv_i^T\e)^2=1$, we have:
\begin{align}
\frac{1}{|c_n|}&=\sum_i\frac{(\e^T\vv_i)^2}{\mu_1-\lambda_i} \leq \lsum_{i\in S_{|c_n|}} \frac{(\e^T\vv_i)^2}{\mu_1-\lambda_i}+\frac{1}{2|c_n|}\nonumber
\end{align}
Further, since $\mu_1-\lambda_1\leq \mu_1-\lambda_i$, $\forall i>1$,
\begin{align}
%\frac{1}{2c_n}&\leq \frac{\lsum_{i\in S} (\e^T\vv_i)^2}{\mu_1-\lambda_1}\nonumber\\
\mu_1-\lambda_1\leq 2|c_n| \lsum_{i\in S_{|c_n|}} (\e^T\vv_i)^2
\label{eq:bound-cn-pos}
\end{align}
%Now we will consider the case where $c_n<0$.
%\paragraph{Case $\mathbf{c_n<0}$:}
%Let $c_n^-$ be the negative part of $c_n$. Using Equation~\ref{eq:interlacetwo} we see that in this case $\lambda_1\geq \mu_1$, and $\forall i>1$, $\mu_1 \geq \lambda_i$.
%\begin{align}
%%\sum_i \frac{(\vv_i^T\e)^2}{\mu_1-\lambda_i}=-\frac{1}{|c_n|}\\
%&\frac{(\vv_1^T\e)^2}{\lambda_1-\mu_1}-\sum_{i>1}\frac{(\e^T\vv_i)^2}{\mu_1-\lambda_i}=\frac{1}{c_n^-}\nonumber\\
%%&\frac{1}{c_n^{-}}\leq \frac{(\vv_1^T\e)^2}{\lambda_1-\mu_1}\nonumber\\%\sum_{i>1}\frac{(\e^T\vv_i)^2}{\mu_1-\lambda_i}\nonumber\\%-\frac{(\e^T\vv_1)^2}{\lambda_1-\mu_1}\nonumber\\
%%&\frac{1}{|c_n|}\leq \frac{(\vv_1^T\e)^2}{\lambda_1-\mu_1}\\
%&\lambda_1-\mu_1\leq c_n^- (\e^T\vv_1)^2\label{eq:bound-cn-neg}%=\n{O}_P\bb{\frac{1}{n^{3/2}}}
%\end{align}

Let $c_n^-$ equal $-c_n1(c_n<0)$.
Combining Equations~\ref{eq:bound-cn-neg} and~\ref{eq:bound-cn-pos} we see that:
\begin{align}\label{eq:case-by-case}
|\lambda_1-\mu_1|\leq |c_n|\max(2\lsum_{i\in S_{c_n^-}} (\e^T\vv_i)^2, (\e^T\vv_1)^2).
\end{align}

%\purna{move this later}

%Now, we will first use a result on local density of states from~\citep{ER1-erdos} to bound the size of $S$. %Here is the result:
%The result is as follows:

We now invoke Result~\ref{res:local-semicircle} to bound the size of $S_{c_n^-}$. We use Equation~\ref{eq:weyl-dev}, and Result~\ref{res:maxeig} to note that, $|\mu_1|> 5$ with probability tending to zero as $n\rightarrow\infty$, and hence we can apply Result~\ref{res:local-semicircle}.
%Further, as we will show that $S_{c_n^-}$ is a set of eigenvalues at the edge of the spectrum, we use Result~\ref{res:eigvec-proj} to bound the sum of eigenvector projections $(\e^T\vv_i)^2$ over $i\in S_{c_n^-}$.
\begin{figure}
\caption{\label{fig:semicircle}The semicircle distribution}
\hspace{10em}\includegraphics[scale=.3]{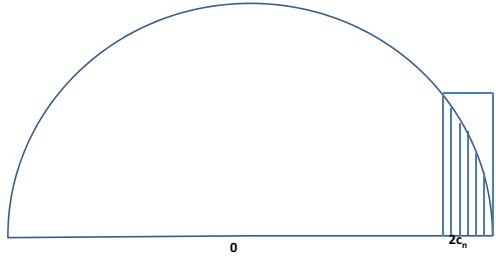}
\end{figure}
Let $b=\mu_1$, and $a=\mu_1-2c_n^-$.% Using Weyl's inequalities we also know that $\mu_1\leq \lambda_1+|c_n|$.

Clearly, we have $\int_{a}^{b}\rho_{sc}(x)dx\leq \int_{a}^{2}\rho_{sc}(x)dx$, since $\rho_{sc}(x)=0$ for $x>2$.
Now, from Equation~\ref{eq:semicircle} we see that $\int_{a}^{2}\rho_{sc}(x)dx$ is proportional to the area of the shaded region in Figure~\ref{fig:semicircle}, which can be upper bounded by the area of a rectangle having sides of length $2c_n^-$ and $\sqrt{4-(2-2c_n^-)^2}$.
 Hence,
 $\int_{a}^{b}\rho_{sc}(x)dx\leq \int_{a}^{2}\rho_{sc}(x)dx\leq C_2 (c_n^-)^{3/2}$ for some positive constant $C_2$. Now, for $p$ fixed w.r.t $n$, $c_n^-=O_P(n^{-1/2})$, which together with Equation~\ref{eq:local-density-er1} yields:
 \begin{align}
 \label{eq:nab}
|S_{c_n^-}|= \mathcal{N}(a,b)=\mathcal{N}_{sc}(a,b)+O_P(\log n)^{L}= O_P(n^{1/4}).
 \end{align}
 %Plugging this into Equation~\ref{eq:bound_cn_pos}, we see that for $c_n>0$,
%Thus we have,
% \begin{align*}
%% \label{eq:final-step-cn-pos}
%\lsum_{i\in S_{c_n^-}} (\e^T\vv_i)^2 \leq \mathcal{N}(a,b) \frac{\lsum_{\lambda_i\in [a,b]} (\e^T\vv_i)^2}{\mathcal{N}(a,b)}
% \end{align*}

 %Since, Equation~\ref{eq:nab} ensures that if $\lambda_i\in [a,b]$, then $i$
 %Finally, Equation~\ref{eq:nab} ensures that if $\lambda_i\in [a,b]$, then $i$ is indeed on the edge of the spectrum; more formally, for some constant $C_0>0$, $i\not\in[\varphi^{-4C_0}n^{1/2},n-\varphi^{-4C_0}n^{1/2}]$.
 Finally, we can invoke Result~\ref{res:eigvec-proj} to obtain:
 \begin{align}
 \label{eq:final-step-cn-pos}
\lsum_{i\in S_{c_n^-}} (\e^T\vv_i)^2 =\tilde{O}_P(n^{-3/4})
 \end{align}

Since $(\e^T\vv_1)^2=\n{O}_P(1/n)$ using Result~\ref{res:eigvec-proj}, Equation~\ref{eq:case-by-case} in conjunction with Equation~\ref{eq:final-step-cn-pos} yields $|\lambda_1-\mu_1|=\n{O}_P(n^{-5/4})$.
% $$\mbox{If $\ph>p$,  } \mu_1-\lambda_1=\n{O}_P\bb{n^{-5/4}} $$
%The bound on $|\mu_n-\lambda_n|$ can be proved using an almost identical argument; the only difference will be instead of examining $\e^T G(\mu_1)\e$, we will examine $\e^T G(\mu_n)\e$.
The  $\tilde{O}$ notation in Definition~\ref{def:nO} ensures that $\tilde{O}_P(n^{-5/4})$ is $o_P(1/n)$ for large enough $n$.
%Now we will consider the case where $c_n<0$.
%\paragraph{Case $\mathbf{c_n<0}$:}
%Using Equation~\ref{eq:interlacetwo} we see that in this case $\lambda_1\geq \mu_1$, and $\forall i>1$, $\mu_1 \geq \lambda_i$.
%\begin{align*}
%%\sum_i \frac{(\vv_i^T\e)^2}{\mu_1-\lambda_i}=-\frac{1}{|c_n|}\\
%&-\frac{(\vv_1^T\e)^2}{\lambda_1-\mu_1}+\sum_{i>1}\frac{(\vv_i^T\e)^2}{\mu_1-\lambda_i}=-\frac{1}{|c_n|}\\
%&-\frac{1}{|c_n|}\geq -\frac{(\vv_1^T\e)^2}{\lambda_1-\mu_1}\\
%%&\frac{1}{|c_n|}\leq \frac{(\vv_1^T\e)^2}{\lambda_1-\mu_1}\\
%&\lambda_1-\mu_1\leq |c_n| (\vv_1^T\e)^2=\n{O}_P\bb{\frac{1}{n^{3/2}}}
%\end{align*}
%
%The bound on $|\mu_n-\lambda_n|$ can be proved using an almost identical argument; the only difference will be instead of examining $\e^T G(\mu_1)\e$, we will examine $\e^T G(\mu_n)\e$.

\end{proof}
Finally we are ready to prove our main result.
\subsection{Proof of Theorem~\ref{thm:TW}}
\begin{proof}
We proceed in two steps. First we consider the matrix  $\n{A}_1'=(A-\hat{P})/\sqrt{(n-1)p(1-p)}$. We note that:
$$\n{A}_1'-\n{A}_1=\frac{(\ph-p)I}{\sqrt{(n-1)p(1-p)}}.$$
Since $p$ is a constant w.r.t $n$, using Lemma~\ref{lem:tw1} we have:
\begin{align}
\label{eq:naoneprime}
\lambda_1(\n{A}_1')=\mu_1+O_P(n^{-3/2})=\lambda_1+o_P(1/n).
\end{align}

Finally, we note that $\lambda_1(\natwo)=\lambda_1(\n{A}_1')\sqrt{\frac{p(1-p)}{\ph(1-\ph)}}$. A simple application of the Chernoff bound gives $\sqrt{\ph(1-\ph)}=\sqrt{p(1-p)}(1+O_P(1/n))$, and  thus using Equation~\ref{eq:naoneprime} we have,
 $$\lambda_1(\natwo)=\lambda_1+o_P(n^{-2/3}).$$
 However, Equation~\ref{eq:TW} establishes the edge universality of $\n{A}$, thus yielding the final result.
\end{proof}

We conclude this section with a proof of Lemma~\ref{lem:sbm}.
\subsection{Proof of Lemma~\ref{lem:sbm}.}
\begin{proof}
If $B_{ii}> \sum_{j\neq i} B_{ij}$, then $B$ is a positive definite matrix by diagonal dominance. Hence, $ZBZ^T$ is also positive definite. Since we are considering the dense regime of degrees, i.e. where the elements of $B$ are constant w.r.t $n$, the  $k$ largest eigenvalues of $E[A|Z]$ (Equation~\ref{eq:sbmdef}) are of the form $C_i n$, where $C_i, 1\leq i\leq k$, are positive constants. ~\citet{Oliveira2010} show that $\lambda_i(A)=\lambda_i(E[A|Z])+O_P(\sqrt{n\log n})$. Hence with high probability, the $k$ largest eigenvalues of $A$ will be positive. Using Weyl's identity we have $\lambda_{2}(A)\leq \lambda_1(A-\hat{P})\leq \lambda_{1}(A)$. Thus with high probability $\lambda_1(A-\hat{P})\geq Cn$ for some positive constant $C$. Thus for large $n$, $\lambda_1(\natwo)\geq C'\sqrt{n}$ w.h.p, and thus the result is proved.
\end{proof}
\section{Experiments}
In this section we present experiments on simulated and real data to demonstrate the performance of our method. We use simulated data to demonstrate two properties of our hypothesis test. First we show that it can differentiate an \er graph from another with a small cluster planted in it, namely a \sbm with one class much smaller in size that the other. Secondly we show that, while our theory only holds for probability of linkage $p$ fixed w.r.t $n$ (i.e. the case with degree growing linearly with $n$), our algorithm works for sparse graphs as well.
\subsection{Hypothesis Test}
 Using the same setup as~\citet{levina_pnas} we plant a densely connected small cluster in an \er graph. In essence we are looking at a stochastic blockmodel with $n=1000$, and $n_1$ nodes in cluster one. The block model parameters are $B_{11}=0.15$, $B_{22}=B_{12}$. We  plot error-bars from fifty random runs on the p-values against increasing $n_1$ values in Figure~\ref{fig:hptest}(A) and p-values against increasing $B_{12}$ values in Figure~\ref{fig:hptest}(B).
A larger p-value simply means that the hypothesis test considers the graph to be close to an \er graph. In Figure~\ref{fig:hptest}(A) we see that the p-values decrease as $n_1$ increases from thirty to a hundred. This is expected since the planted cluster is easier to detect as $n_1$ grows. On the other hand, in Figure~\ref{fig:hptest}(B) we see that the p-values increase as $P_{12}$ is increased from 0.04 to 0.1. This is also expected since the graph is indeed losing its block structure.
\begin{figure}[!htb]
\caption{\label{fig:hptest}We plot p-values computed using Algorithm~\ref{alg:hptest} in simulated networks of $n=1000$. On the left panel, $B_{11}=0.15$ and $B_{12}=B_{22}=0.05$. }
 \begin{tabular}{@{\hspace{0em}}c@{\hspace{0em}}c}
\includegraphics[width=0.5\linewidth]{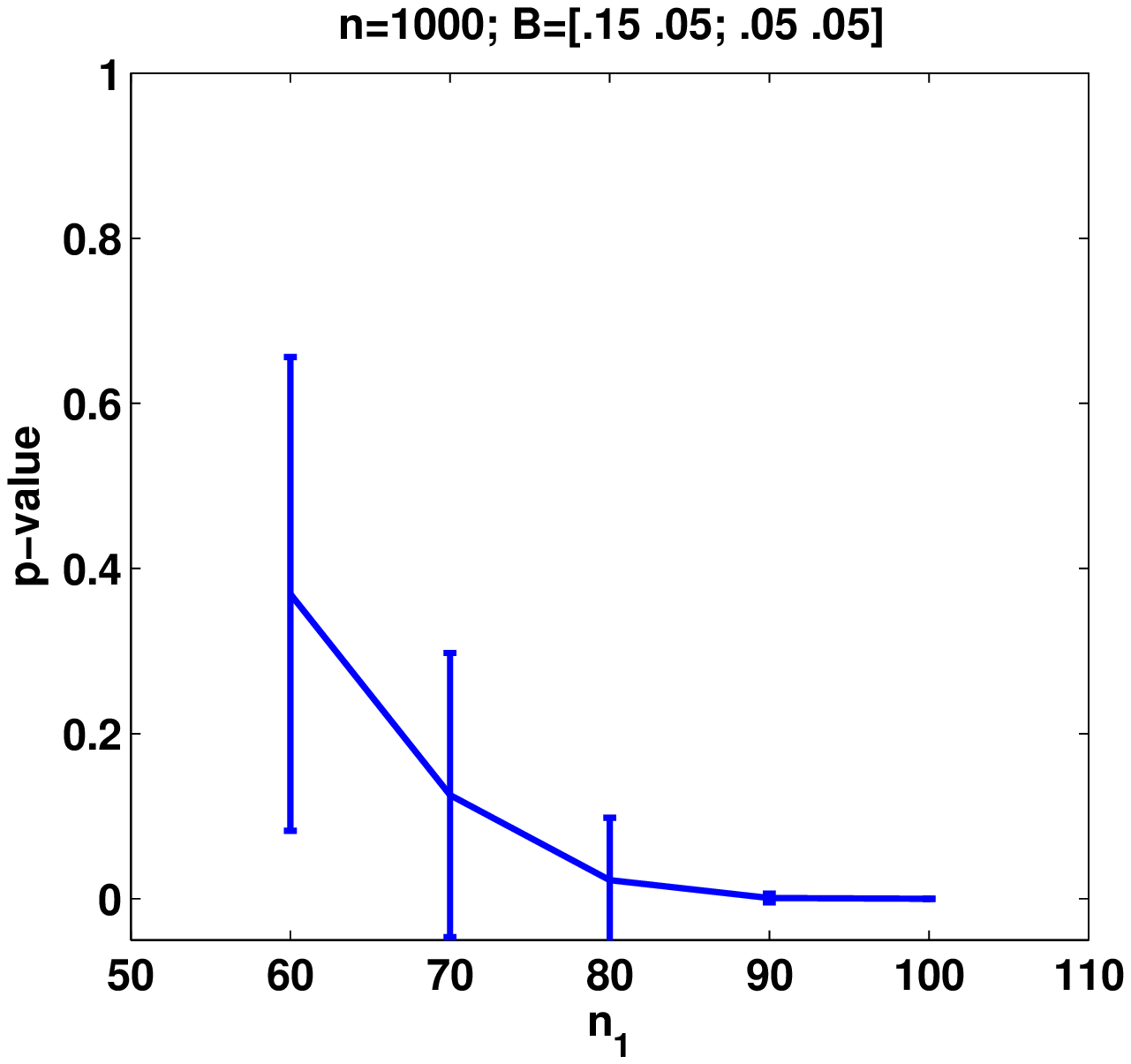}&
\includegraphics[width=0.55\linewidth]{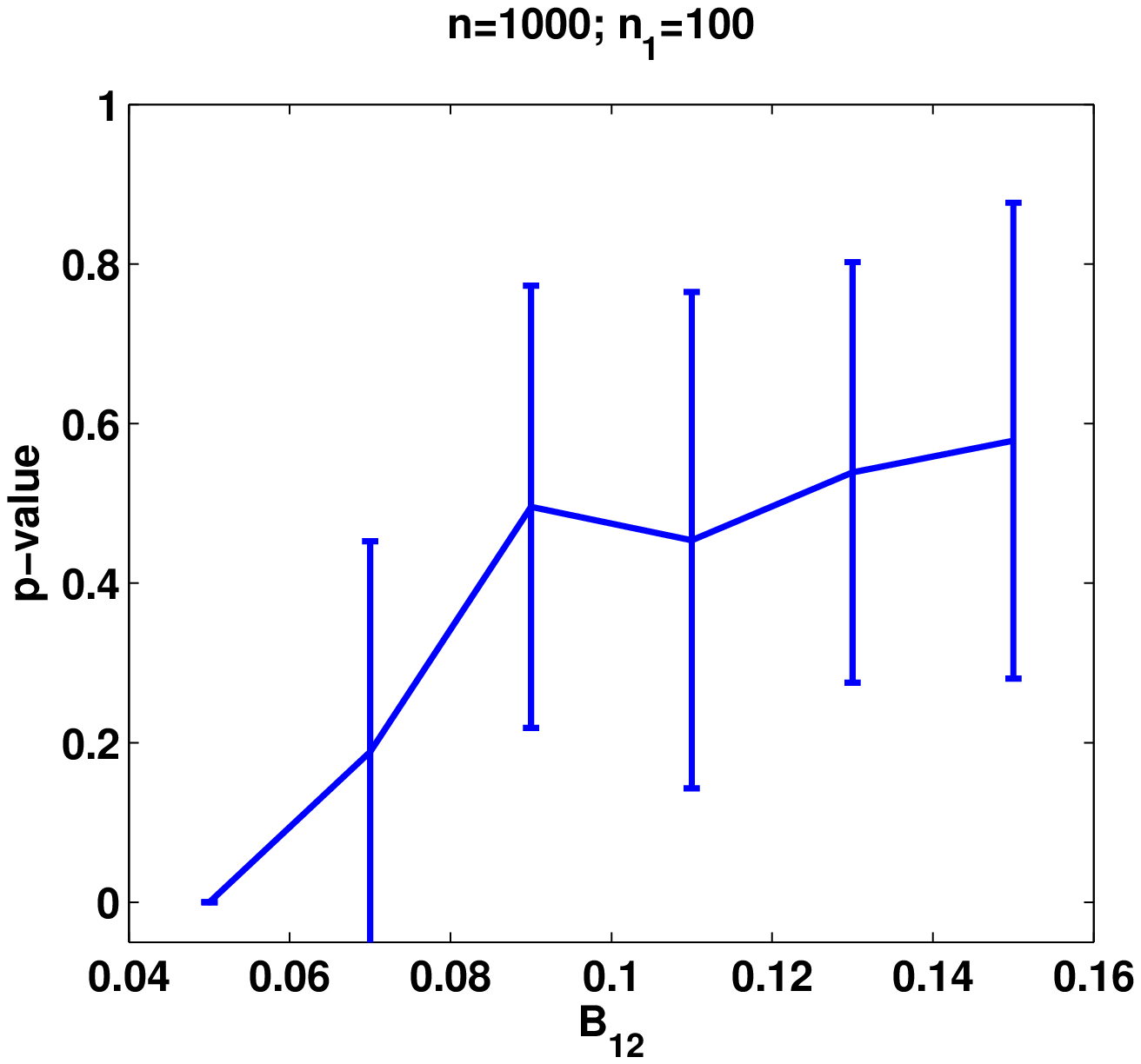}\\
(A)&(B)
%\includegraphics[width=0.43\linewidth]{er_200_05}&
%\includegraphics[width=0.43\linewidth]{er_500_05}\\
% (E)&(F)
\end{tabular}
\end{figure}
\subsection{Nested Stochastic Blockmodels}
We present a ``nested'' \sbm, where the communities become increasingly dense. Specifically, $B_{11}=B_{22}=\rho a$, $B_{12}=\rho b$, $B_{13}=B_{23}=\rho c$, and $B_{33}=\rho d$, where $a=0.2$, $b=0.1$, and $c=0.01$. As we increase $\rho$ from 0.05 to 0.25 in steps of 0.05, the average expected degree of a $n=1000$ node graph increases from $2.8$ to $13.8$. We plot errorbars on p-values from fifty random runs. Similar to~\citep{levina_pnas} we use the Adjusted Rand Index, which is a well known measure of closeness between two sets of clusterings with $n_1=n_2=200$ and $n_3=600$.

\begin{figure}[!ht]
%  \begin{minipage}{\textwidth}
  \begin{minipage}[t]{.5\linewidth}%{0.49\textwidth}
  \par\vspace{0pt}
    \centering
\includegraphics[width=\linewidth]{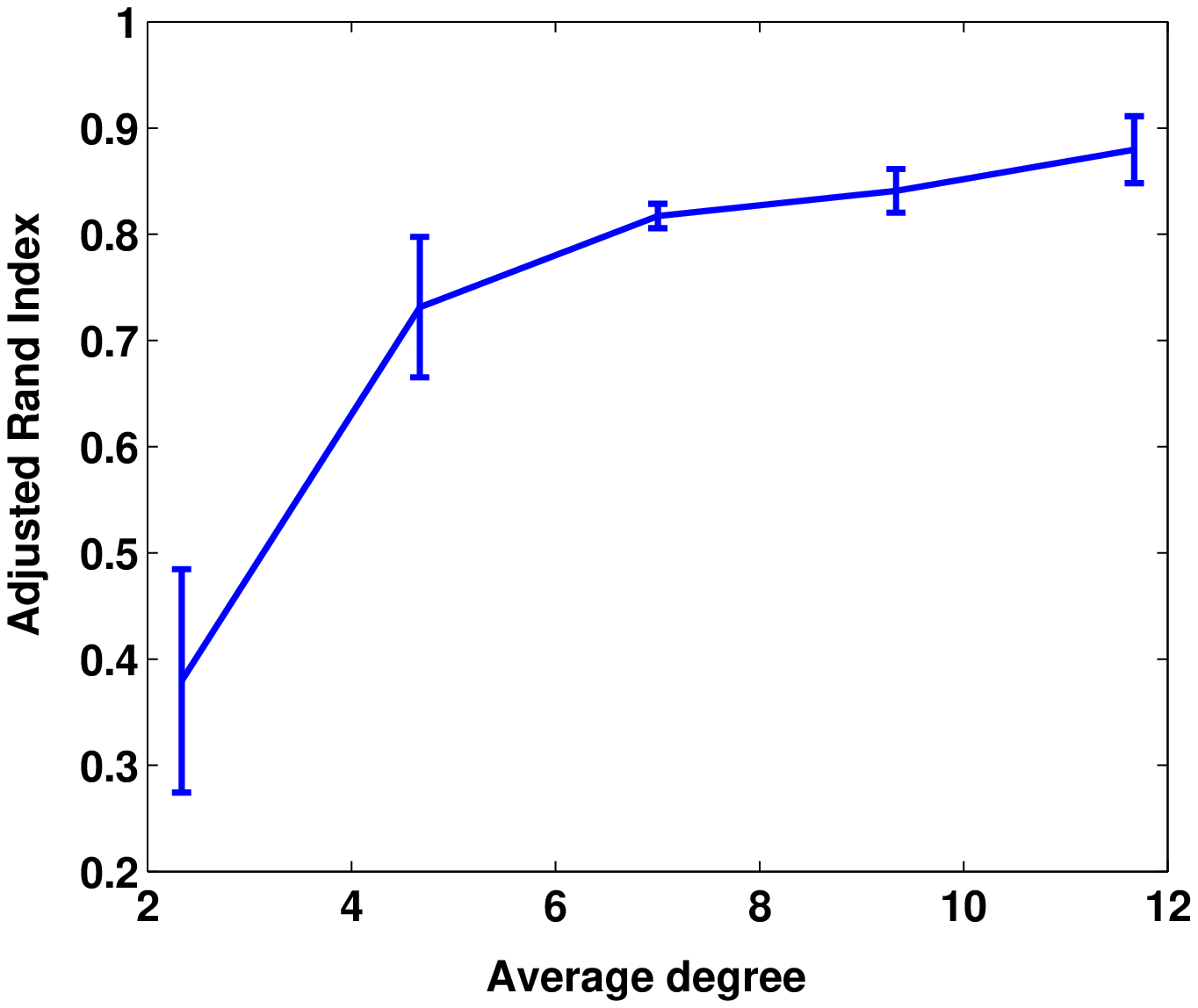}
\vspace{-2em}
\caption{\label{fig:nested-bm}We plot Adjusted Rand Index averaged over fifty random runs. Higher the value, closer the clustering to the true labels.}
  \end{minipage}
  \hspace{1em}
  %\begin{minipage}[b]{0.49\textwidth}
  \begin{minipage}[t]{.4\linewidth}%{0.49\textwidth}
  \par\vspace{60pt}
\begin{tabular}{| l | c |} \hline
Algorithm&Adjusted Rand Index \\\hline
\texttt{E} & 0.55$\pm$ 0.03\\
\rb & 0.88$\pm$ 0.03 \\
\hline
\end{tabular}
\vspace{35pt}
\caption{Comparison with the Community Extraction algorithm (\CE) averaged over fifty random runs\label{fig:compare_nested}}
\end{minipage}
  %\end{minipage}
  \end{figure}
%\begin{table}[h]
%\centering
%%%\parbox{0.4\textwidth}{
%\begin{minipage}[c]{0.53\textwidth}
%\begin{footnotesize}
%\begin{tabular}{| l | r |} \hline
%\texttt{E} & 0.56$\pm$ 0.001\\
%\rb & 0.87$\pm$ 0.04 \\
%\end{tabular}
%%\caption{\label{tab:table}Table}
%\end{footnotesize}
%\end{minipage}
%\qquad
%\begin{minipage}[c]{0.53\textwidth}%
%\centering
%\includegraphics[width=0.43\linewidth]{incr_rho_ari}
%%\caption{Figure}
%%\label{fig:figure}
%\end{minipage}
%\end{table}
%
%\begin{figure}[H]
%\caption{\label{fig:nested-bm}We plot Adjusted Rand Index averaged over twenty random runs. Higher the value, closer the clustering to the true labels.}
%\includegraphics[width=0.43\linewidth]{incr_rho_ari}
%\end{figure}
Figure~\ref{fig:nested-bm} shows that the Adjusted Rand Index grows as the average degree increases. This also demonstrates that while theory holds only for fixed $p$ w.r.t $n$, in practice our recursive bipartitioning algorithm works for sparse graphs as well. We used a p-value cutoff of 0.01 for the simulation experiments.

Finally, we compare our method with~\citet{levina_pnas}.  In Figure~\ref{fig:compare_nested} we show the ARI score obtained using \texttt{E} and \rb for our nested block model setting with the largest expected degree. In this particular case, \CE first extracts the community containing communities one and two, and then tries to extract another community from the remainder of the graph, leading to poor performance. This accuracy can be improved by changing their ``sequential'' extraction strategy with a recursive one.
%\purna{------compare with liza}
\subsection{Facebook Ego Networks}
We show our results on ego networks manually collected and labeled by~\citet{McALes12}. Here we have a collection of nine networks which are induced subgraphs formed by neighbors of a node. The central node is called the ego node. The ground truth labels consist of overlapping cluster assignments, also known as circles. The hope is to identify social circles of the ego node by examining the network structure and features on nodes. While~\citet{McALes12}'s work takes node features into account, we only work with the network structure. For every network we remove nodes with zero degree, and cluster the remaining nodes. Since ground truth clusters are sometimes incomplete, in the sense that not all nodes are assigned to some cluster, we use the F-score for comparing two clusterings. Consider the ground truth cluster $C$ and the computed cluster $\hat{C}$. The F-measure between these is defined as follows:
\begin{align*}
Recall(C,\hat{C})&=\frac{|C\bigcap\hat{C}|}{|C|}, \qquad Precision(C,\hat{C})=\frac{|C\bigcap\hat{C}|}{|\hat{C}|}\\
F(C,\hat{C})&=\frac{2\times Precision(C,\hat{C})\times Recall(C,\hat{C})}{Precision(C,\hat{C})+Recall(C,\hat{C})}
\end{align*}

This was extended to hierarchical clusterings by~\citet{fmeas_hierarchy}. For ground truth cluster $C_i$, one computes $x_i=\max_j(F(C_i,\hat{C}_j))$, where $\hat{C}_j$ is obtained by flattening out the subtree for node $j$ in the hierarchical clustering tree. Now the overall $F$ measure is obtained by computing an weighted average $\sum_i x_i |C_i|/(\sum_j |C_j|)$.
For the real data we use a cutoff ($\alpha$ in Algorithm~\ref{alg:recursive}) of 0.0001. We can also stop dividing the graph, when the subgraph size falls under a given number, say $n_\beta$. While we report results without any such stopping conditions added, we would like to note that for $n_\beta=10$,  the F-measures are similar, while the number of clusters are fewer.
%We also stop dividing when the subgraph size falls under $n_\beta$, and we present our results with different $n_\beta$ values.
In Table~\ref{table:facebook} we compare our recursive bipartitioning algorithm (\rb) with~\citet{McALes12} using the code kindly shared by Julian McAuley.
%\purna{Discuss mcauley's method briefly}
%{\small
%Now we compare the F-measure of \rb with those obtained using~\citep{McALes12} in Table`\ref{table:facebook}.
\begin{table}
\hspace{-3em}
%\centering
%\begin{adjustwidth}{-1em}{}
\caption{\label{table:facebook}F-measure comparison on nine Facebook ego-networks}
\resizebox{.99\columnwidth}{!}{
\begin{tabular}{|cccccccccc|}
\hline
&&&&&&&&&\\
Nodes with nonzero degree  &  333&1034&224&150&61&786& 747  & 534&52\\[1ex]
Number of Ground truth clusters & 24   &  9  &  14 &    7 &   13 &   17&    46&    32&    17\\[2ex]\hline
Fmeasure (~\citet{McALes12})& 0.33  &  0.25&    0.58&    0.56&    0.49&    0.48&    0.38&   0.15&    0.40\\[2ex]\hline
Number of clusters learned by \rb & 23  &  66 &   20 &   11 &    8  &  60  &  39 &   38  &   6\\[1ex]
Fmeasure& 0.47 &   0.60 &   0.76&    0.79&    0.71&    0.74&    0.63&  0.32&    0.49\\[1ex]\hline
%Number of clusters($n_\beta=10$) & 9   & 46 &    8 &    7 &    2&    32&    27&    23&     2\\
%Fmeasure($n_\beta=10$)&0.44&    0.59 &   0.73&    0.78  &0.44&    0.68&    0.60&0.27&0.32\\
\end{tabular}
}
%\end{adjustwidth}
\end{table}
%}

We see that we obtain better or comparable F-measures for most of the ego networks. In order to visualize the cluster structure uncovered by \rb, we present Figure~\ref{fig:facebook-density-plot}. In this figure we show a density image of a matrix, whose rows and columns are ordered such that all nodes in the  same subtree appear consecutively. Thus nodes in every subtree correspond to a diagonal block in Figure~\ref{fig:facebook-density-plot}(A). Also, a subtree belonging to a parent subtree will give rise to a diagonal block contained inside that of the parent subtree. This helps one to see the hierarchical structure. Further,  we shade every diagonal block using the \ph computed from the subgraph induced by nodes in the subtree corresponding to it.

 In Figure~\ref{fig:facebook-density-plot}(A) we plot this matrix for one of the ego networks in log scale. Lighter the shading in a block, higher the corresponding \ph. In order to match this image with the graph itself, we also plot the adjacency matrix with rows and columns ordered identically in Figure~\ref{fig:facebook-density-plot}(B). The density plot shows that the hierarchical splits find regions of varied densities.
\begin{figure}[h]
\begin{center}
\begin{tabular}{cc}
\includegraphics[width=.4\linewidth]{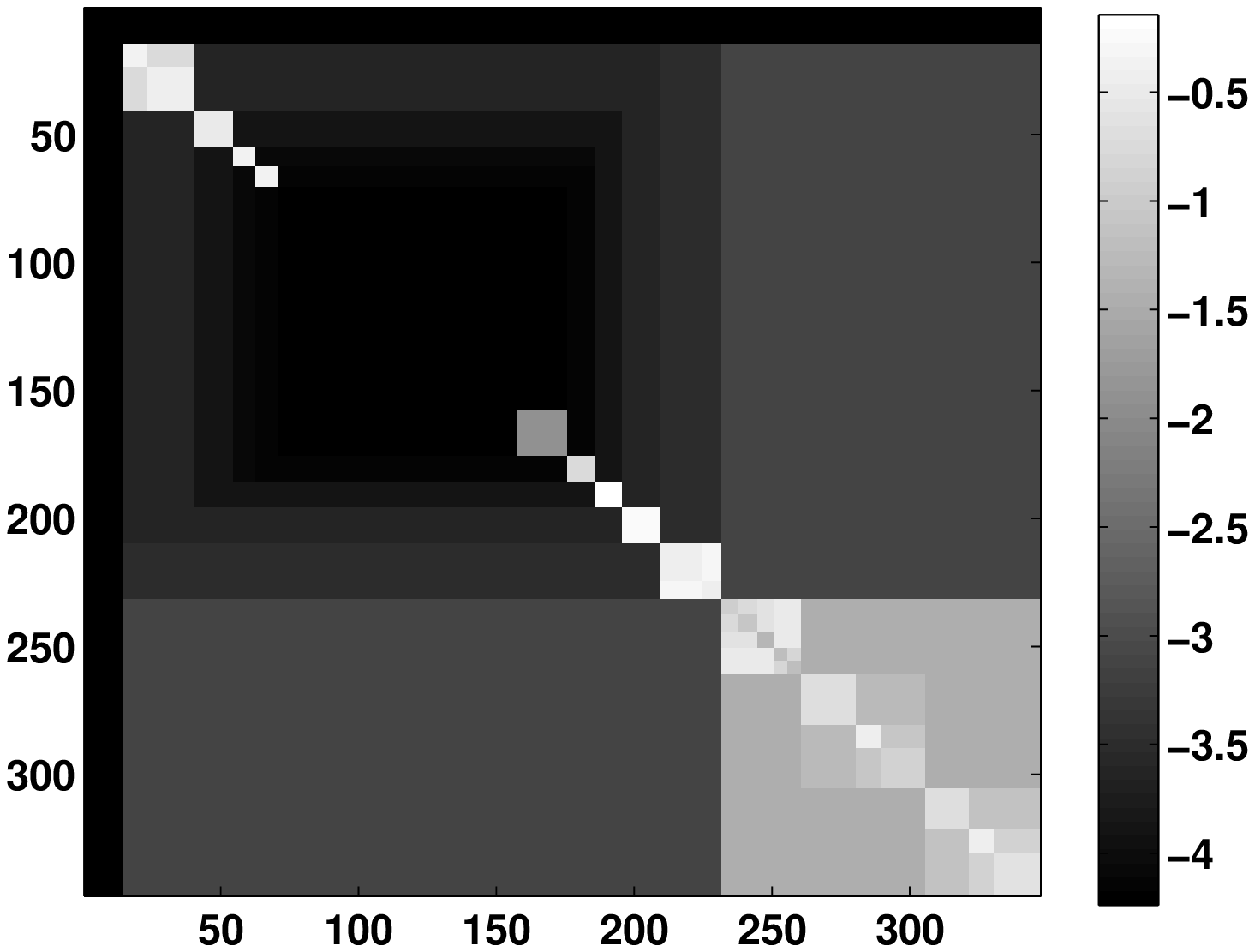}&\includegraphics[width=.4\linewidth]{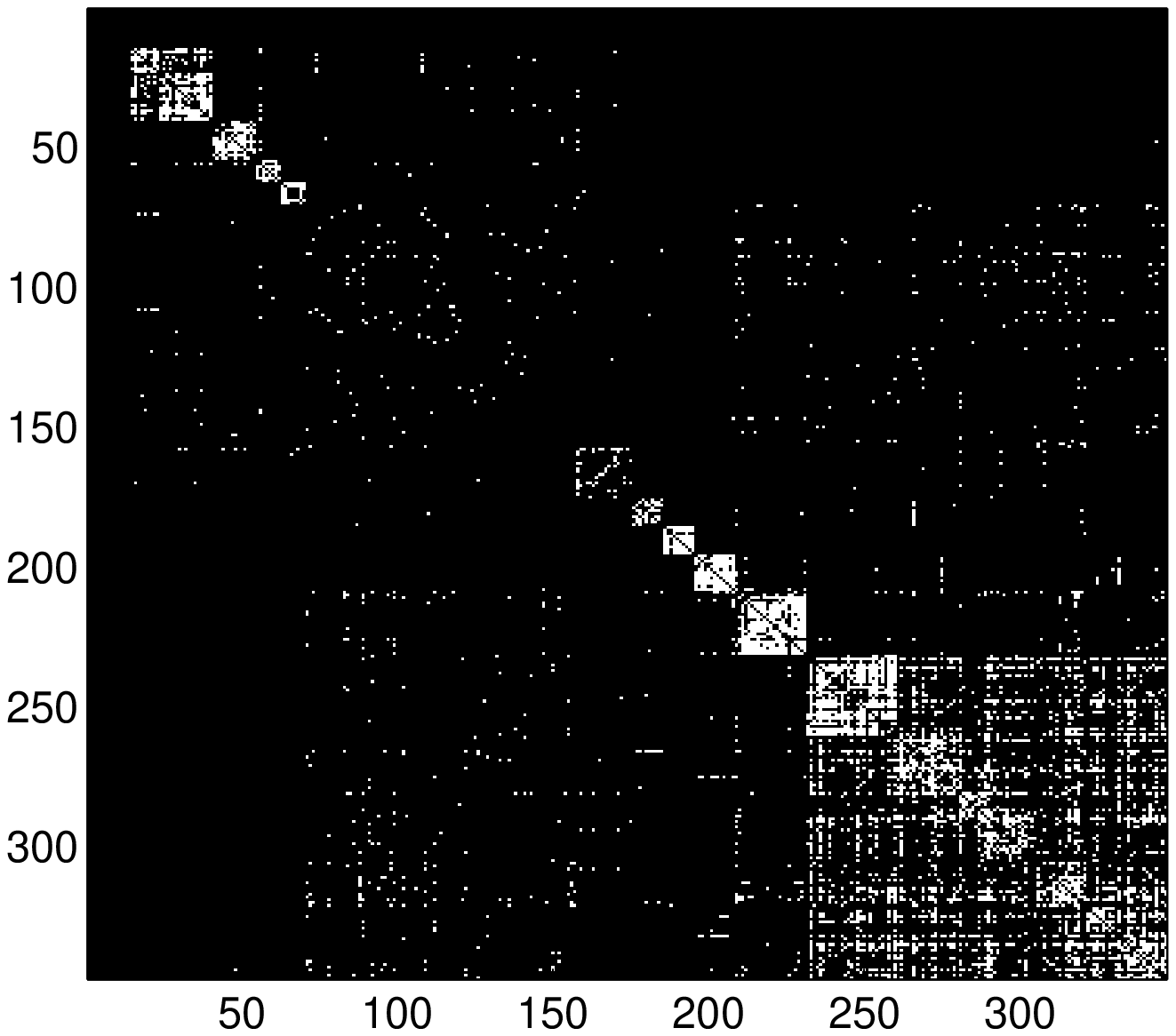}\\
(A)&(B)\\
\end{tabular}
\caption{\label{fig:facebook-density-plot}(A) Density plot with rows ordered to have nodes from the same cluster consecutively. (B) Adjacency Matrix using the same order. }
\end{center}
\end{figure}
\subsection{Karate Club and the Political Books Network}
The Karate Club data is a well known network which has 34 individuals belonging to a karate club. Later the members split into two groups after a disagreement on class fees~\citep{zachary}. These two groups are considered the ground truth communities.
%\purna{Show fig}
\begin{figure}
\hspace{-3em}
\begin{tabular}{c@{\hspace{-.2em}}c@{\hspace{-.2em}}c@{\hspace{-.2em}}c}
\includegraphics*[width=0.26\linewidth]{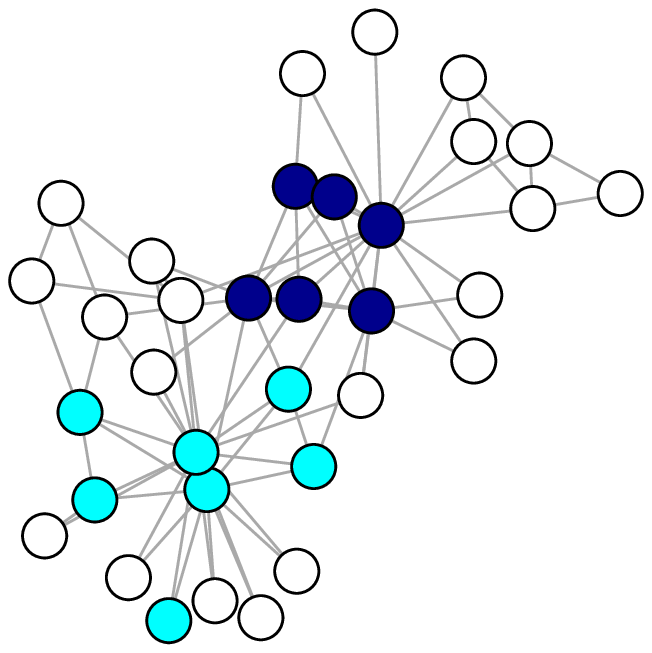}&
\includegraphics*[width=0.26\linewidth]{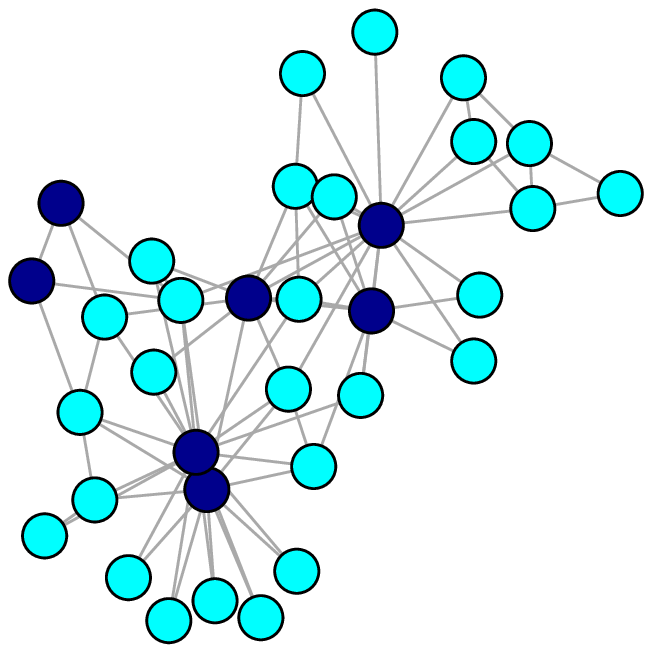}&
\includegraphics*[width=0.26\linewidth]{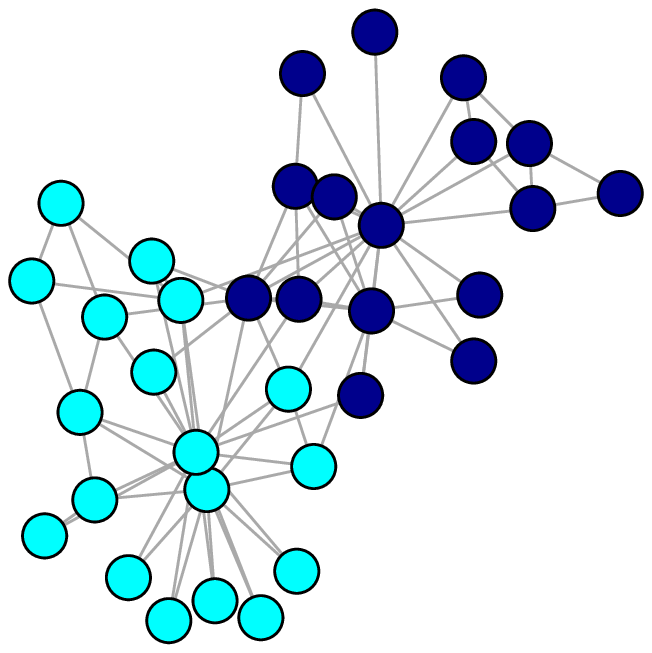}&
\includegraphics*[width=0.26\linewidth]{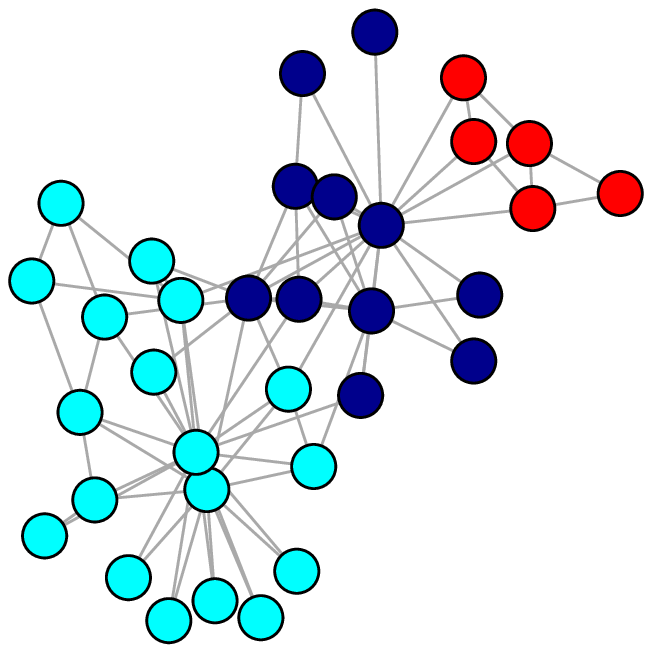}\\
(A)&(B)&(C)&(D)\\
\end{tabular}
\caption{\label{fig:karate}Clusters obtained using (A) Community Extraction, (B) Pseudo Likelihood, (C) Recursive Bipartitioning with p-value cutoff 0.0001 and (D)Recursive Bipartitioning with p-value cutoff 0.01 }
\end{figure}
 We present the clusterings obtained using the different algorithms in Figure~\ref{fig:karate}. In particular, we show the clusterings obtained using the extraction method (\CE) in Figure~\ref{fig:karate}, the Pseudo Likelihood method (\texttt{PL}) with $k=3$~\citep{PL} in Figure~\ref{fig:karate}(B), our recursive bipartitioning algorithm (\rb) using p-value cutoff of $0.0001$ in Figure~\ref{fig:karate}(C), and finally \rb with p-value cutoff of $0.01$ in Figure~\ref{fig:karate}(D). These results are generated using the code kindly shared by Yunpeng Zhao and Aiyou Chen. We see that \CE finds the cores of the two communities, \texttt{PL} puts high degree nodes in one cluster (similar to the MCMC method for fitting a \sbm in~\citet{levina_pnas}). Our method achieves perfect clustering for p-value cutoff of 0.0001. However our statistic computed from the dark blue group has a p-value of about 0.003, which is why we also show the clustering with a larger cutoff. Here the dark blue community is broken further into a clique-like subset of nodes, and the rest. Below we also provide a density plot in Figure~\ref{fig:karate-density-plot} (A) and an image of the adjacency matrix with rows and column ordered similarly to the density plot in Figure~\ref{fig:karate-density-plot} (B) to elucidate this issue.
\begin{figure}[!htb]
\begin{center}
\begin{tabular}{cc}
\includegraphics[width=.4\linewidth]{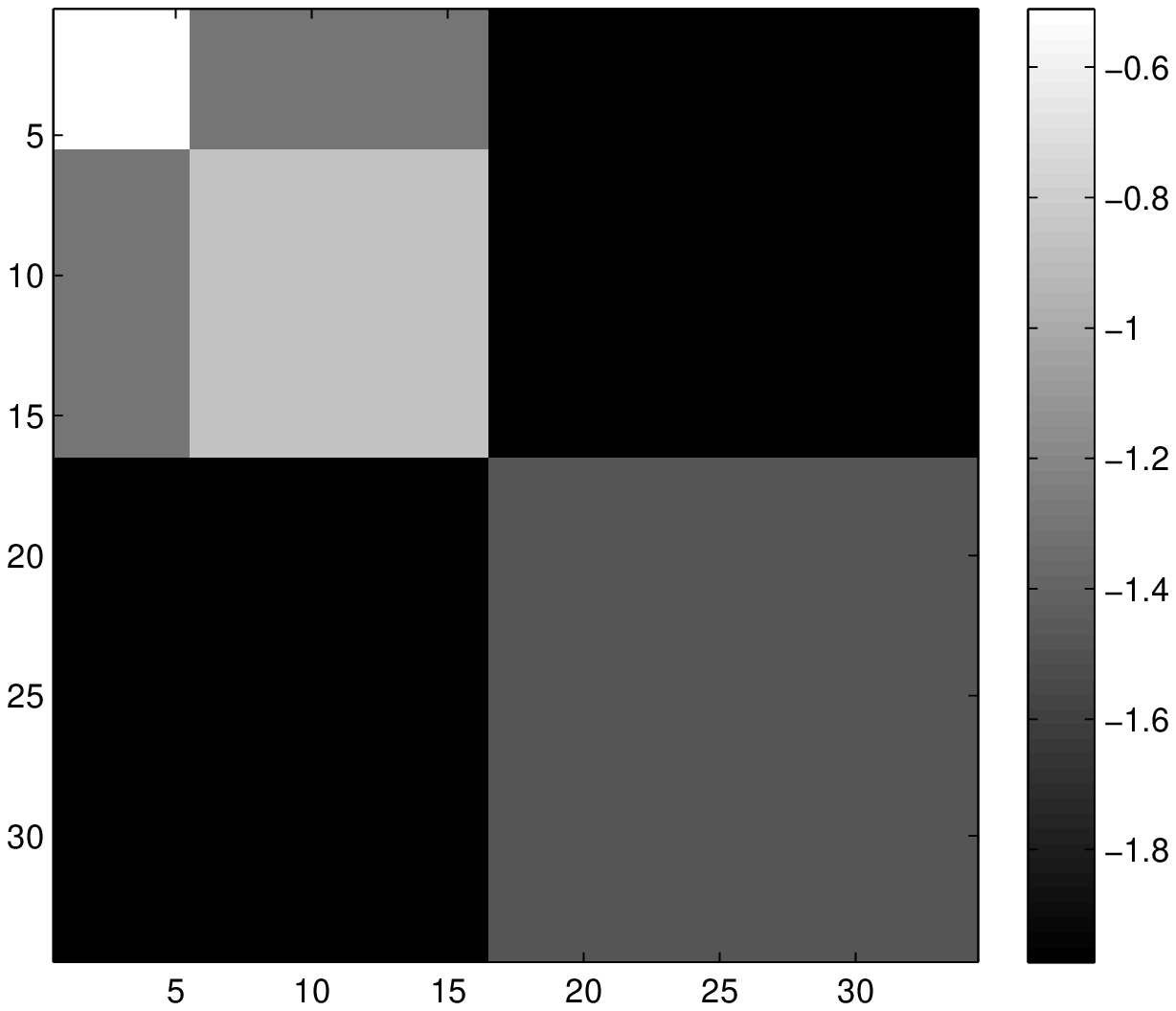}&\includegraphics[width=.37\linewidth]{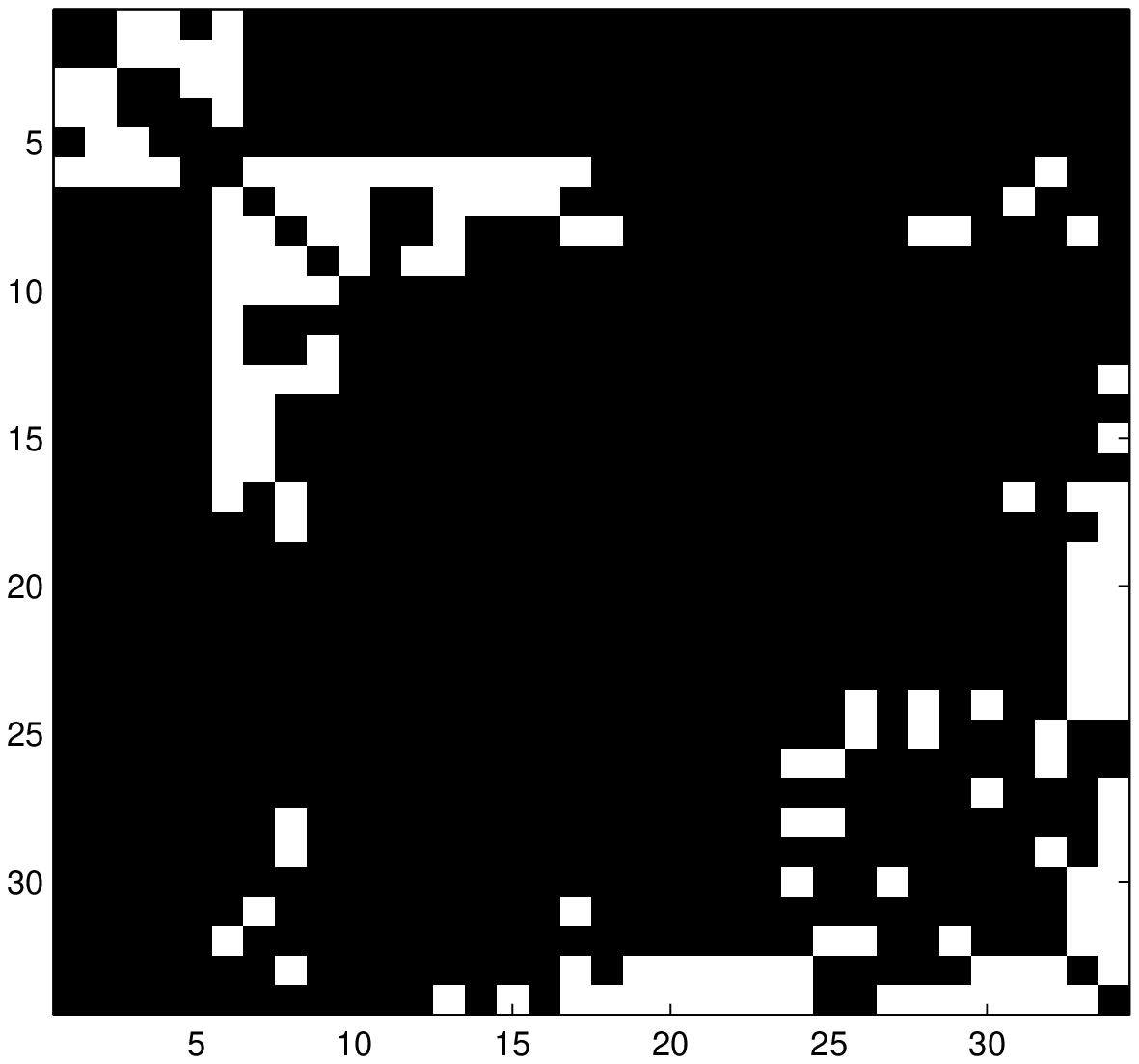}\\
(A)&(B)\\
\end{tabular}
\caption{\label{fig:karate-density-plot}(A) Density plot of the Karate club data with rows ordered to have nodes from the same cluster consecutively. (B) Adjacency Matrix using the same order.}
\end{center}
\end{figure}

%\begin{figure}[H]
%\caption{\label{fig:karate}Clusterings of the Karate Club data. }
% \begin{tabular}{@{\hspace{0em}}c@{\hspace{0em}}c}
%\includegraphics[width=0.43\linewidth]{sbm_hptest_incr_n}&
%\includegraphics[width=0.43\linewidth]{sbm_hptest_incr_p12}\\
%(A)&(B)
%%\includegraphics[width=0.43\linewidth]{er_200_05}&
%%\includegraphics[width=0.43\linewidth]{er_500_05}\\
%% (E)&(F)
%\end{tabular}
%\end{figure}

The political books network~\citep{newman_polbook} is an undirected network of 105 books. Two books are connected if they are co-purchased frequently on Amazon. While the ground truth is not available on this dataset, the common conjecture~\citep{levina_pnas} is that some books are strongly political, i.e. liberal or conservative, and the others are somewhat in-between. The authors also show that existing algorithms give reasonable results with $k=3$ clusters, and \CE returned the cores of the communities with $k=2$. We show clustering obtained using \texttt{PL} with $k=3$ in Figure~\ref{fig:polbooks}(A), the two communities extracted by the algorithm \texttt{E} in Figure~\ref{fig:polbooks}(B), clustering by \rb in Figure~\ref{fig:polbooks}(C), and finally our density plot in Figure~\ref{fig:polbooks}(D). %These results were generated using the code kindly shared by Yunpeng Zhao and Aiyou Chen.%, we show the clusterings based on The Community Extraction algorithm~\citep{levina_pnas} (E), the pseudo likelihood algorithm~\citep{PL} (PL) and our algorithm.

Algorithm \texttt{E} finds the core set of nodes from the green and blue clusters found by \texttt{PL}. \rb on the other hand breaks the graph into six parts. The first split is between the blue nodes with the rest. The second split separates the yellow nodes from the green nodes. The next two splits divide the green nodes and the blue nodes into further smaller clusters.  We overlay the density plot with the row and column reordered adjacency matrix, so that brightest pixels correspond to an edge. The ordering simply puts nodes from the same cluster consecutively, and clusters in the same subtree consecutively. This figure shows the hierarchically nested structure, where we pick up denser subgraphs. %We would like to point out that our algorithm is not a new clustering algorithm, and one can easily replaced the spectral clustering step with some other method, possibly \texttt{E} or \texttt{PL}.
%\purna{Add explanation}

\begin{figure}[!htb]
 \begin{tabular}{@{\hspace{-2em}}c@{\hspace{-1em}}c@{\hspace{-1em}}c@{\hspace{-0em}}c}
\includegraphics*[width=0.26\linewidth]{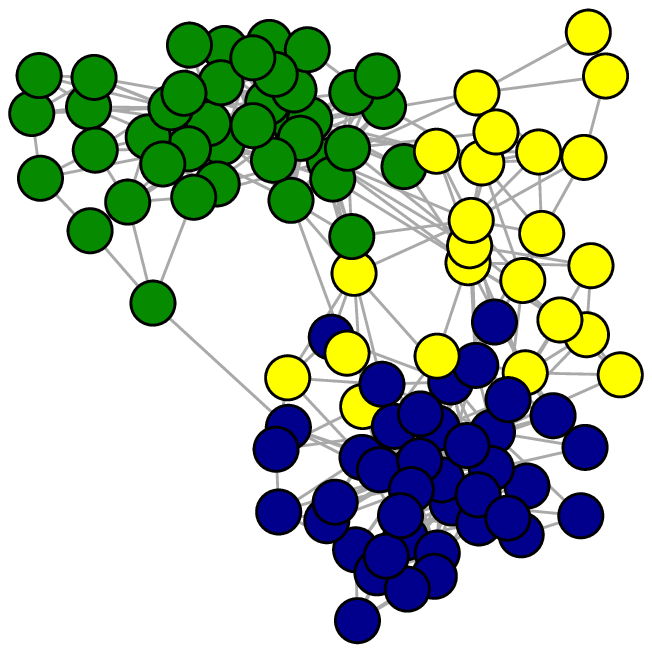}&
\includegraphics*[width=0.25\linewidth]{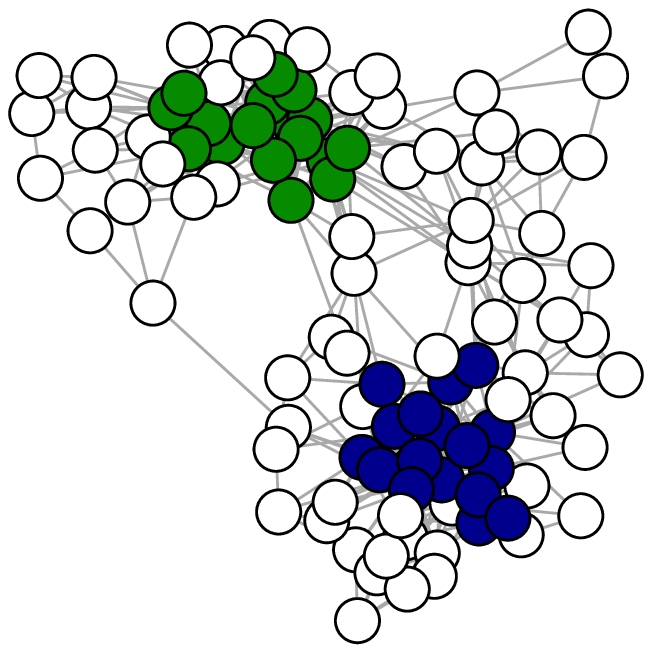}&
\includegraphics*[width=0.25\linewidth]{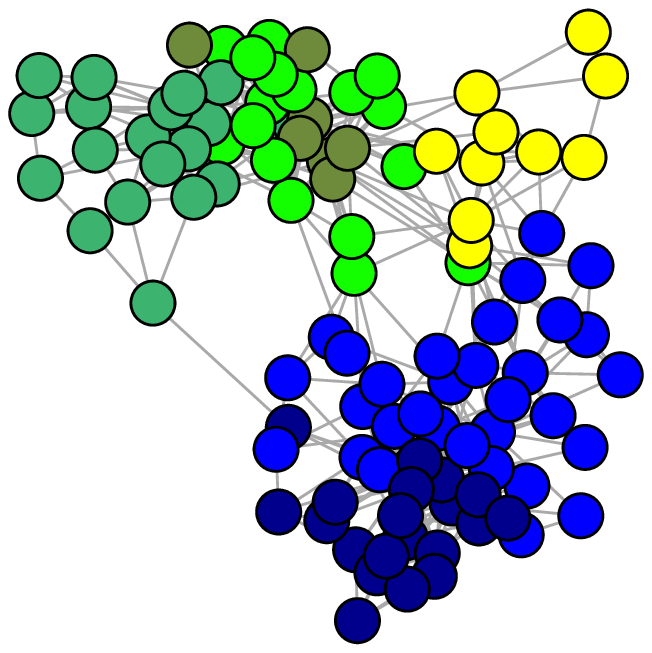}&
  \includegraphics*[width=0.3\linewidth]{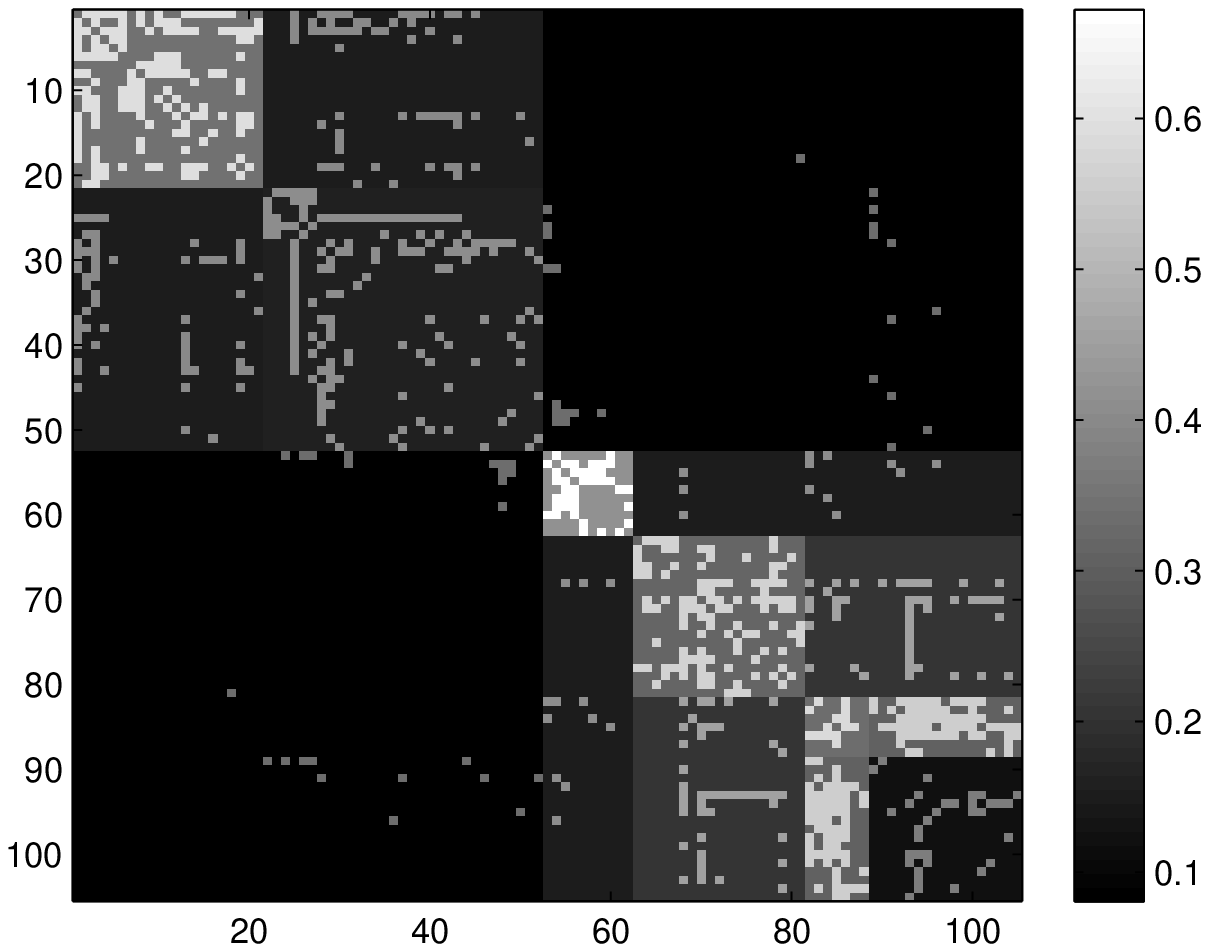}\\
(A)&(B)&(C)&(D)
%\includegraphics[width=0.43\linewidth]{er_200_05}&
%\includegraphics[width=0.43\linewidth]{er_500_05}\\
% (E)&(F)
\end{tabular}
\caption{Clusterings of the Political Books data. (A) \texttt{PL}, (B) \texttt{E}, (C) \rb, and (D) subgraph density plot superimposed with the adjacency matrix.\label{fig:polbooks}}
\end{figure}
\section{Discussion}
In this paper we have proposed an algorithm which provably detects the number of blocks in a graph generated from a \sbm. Using the largest eigenvalue of the suitably shifted and scaled adjacency matrix, we  develop a hypothesis test to decide if the graph is generated from a \sbm with more than one blocks. Our approach is significantly different from existing work because, we theoretically establish the limiting distribution of the statistic under the null, which in our case is that the graph is \er. We also propose to obtain small sample corrections on the limiting distribution, which together with the known form of the limiting law, alleviates the need for expensive parametric bootstrap replicates. Using this hypothesis test we design a recursive bipartitioning algorithm (\rb) which naturally yields a hierarchical cluster structure.

On nine real datasets with ground truth from Facebook, \rb outperforms the existing method that has been shown to have the best performance among other state of the art algorithms for finding overlapping clusters. We also show the nested cluster structure of varied densities discovered by \rb on the karate club data and the political books data.
We would like to point out that our algorithm is not a new clustering algorithm, and one can easily replace the spectral clustering step with some other method, possibly \texttt{E} or \texttt{PL}. Our experiments on the karate club and political books network is not aimed at showing that we find better quality clusters, but that we find interesting structure matching with existing work without having to specify $k$. We choose Spectral Clustering because of its good theoretical properties in the context of Blockmodels~\citep{rohe_chatterji_yu} and its computational scalability.

 %We also tried simpler statistics like $\theta:=\max_i \sum_j(A_{ij}-d_i/(n-1))^2$.
\section{Acknowledgements}
We thank Elizaveta Levina, Yunpeng Zhao, Aiyou Chen and Julian McAuley for sharing their code.
We are also grateful to Antti Knowles for directing us to the relevant literature for applying the result on isotropic delocalization of eigenvectors  to our setting.
This research was funded in part by NSF FRG Grant DMS-1160319.
\bibliographystyle{chicago}
\bibliography{../../latent-common/latex/purna}
\end{document}